\newtheorem{df}{Definition}[section]
\newtheorem{assum}{Assumption}[section]
\newtheorem{lemma}{lemma}[section]
\newtheorem{theorem}{Theorem}[section]
\newtheorem*{trm}{Theorem}
\newtheorem*{prf}{Proof}
\title{Graph Approximation and Clustering on a Budget}
\author{
Ethan Fetaya \\
Weizmann Institute of Science\\
\texttt{ethan.fetaya@weizmann.ac.il} \\
\\
Ohad Shamir  \\
Weizmann Institute of Science \\
\texttt{ohad.shamir@weizmann.ac.il} \\
\\
Shimon Ullman \\
Weizmann Institute of Science \\
\texttt{shimon.ullman@weizmann.ac.il}
}
\begin{document}

\date{}
\maketitle

\begin{abstract}
We consider the problem of learning from a similarity matrix (such as
spectral clustering and low-dimensional embedding), when computing
pairwise similarities are costly, and only a limited number of entries can
be observed. We provide a theoretical analysis using standard notions of
graph approximation, significantly generalizing previous results (which
focused on spectral clustering with  two clusters). We also propose a new
algorithmic approach based on adaptive sampling, which experimentally
matches or improves on previous methods, while being considerably more
general and computationally cheaper.
\end{abstract}

\section{Introduction}

Many unsupervised learning algorithms, such as spectral clustering
\cite{normCut}, \cite{specClus} and low-dimensional embedding via Laplacian
eigenmaps and diffusion maps \cite{LapEigMap},\cite{DiffMaps}, need as input
a \emph{matrix of pairwise similarities} $W$ between the different objects in
our data. In some cases, obtaining the full matrix can be a costly matter.
For example, $w_{ij}$ may be based on some expensive-to-compute metric such
as W2D \cite{W2D} ; based on some physical measurement (such as in some
computational biology applications); or is given by a human annotator. In
such cases, we would like to have a good approximation of the (initially
unknown) matrix, while querying only a limited number of entries. An
alternative but equivalent viewpoint is the problem of approximating an
unknown weighted undirected graph, by querying a limited number of edges.

This question has received previous attention in works such as \cite{SCoaB}
and \cite{MJ24}, which focus on the task of spectral clustering into two
clusters, and assuming two such distinct clusters indeed exist (i.e. that there
is a big gap between the second and third eigenvalues of the Laplacian
matrix). In this work we consider, both theoretically and algorithmically,
the question of query-based graph approximation more generally, obtaining
results relevant beyond two clusters and beyond spectral clustering.

When considering graph approximations, the first question is what notion of
approximation to consider. One important notion is \emph{cut approximation}
\cite{KarSpar} where we wish for every cut in the approximated graph to have
weight close to the weight of the cut in the original graph up to a
multiplicative factor. Many machine learning algorithms (and many more
general algorithms) such as cut based clustering  \cite{cutClus},  energy
minimization \cite{energy}, etc. \cite{cutsVision}
are based on cuts, so this notion of approximation is natural for these uses.
A stronger notion is \emph{spectral approximation} \cite{speilman}, where we
wish to uniformly approximate the quadratic form defined by the Laplacian up
to a multiplicative factor. This approximation is important for algorithms
such as spectral clustering \cite{specClus}, Laplacian eigenmaps
\cite{LapEigMap}, diffusion maps \cite{DiffMaps}, etc. that use the
connection between the spectral properties of the Laplacian matrix and the
graph.

Our theoretical analysis focuses on the number of queries needed for such
approximations. We first consider the simple and intuitive strategy of
sampling edges uniformly at random, and obtain results for both cut and
spectral approximations, under various assumptions.  We note that these
results are considerably more general than the theoretical analysis in
\cite{SCoaB}, which focuses on the behavior of the 2nd eigenvector of the
Laplacian matrix, and crucially rely on this large eigengap. We then consider
how to extend these results to adaptive sampling strategies, and design a
generic framework as well as a new adaptive sampling algorithm for clustering
(\texttt{CLUS2K}). Compared to previous approaches, the algorithm is much
simpler and avoids doing a costly full eigen-decomposition at each iteration,
yet experimentally appears to obtain equal or even better performance on a
range of datasets.

Our theoretical results build on techniques for graph sparsification
(\cite{KarSpar}, \cite{speilman}), where the task is to find a sparse
approximation to a given graph $G$. This is somewhat similar to our task, but
with two important differences: First and foremost, we do not have access to
the full graph, whereas in graph sparsification the graph is given, and this
full knowledge is used by algorithms for this task (e.g. using the sum of
edge weights associated with each node). Second, our goal is to minimize the
number of edge sampled, not the number of edges in the resulting graph (of
course by sampling a smaller number of edges we will get a sparser graph).
Notice that if we wish to end with a sparse graph, one can always use any
graph sparsification technique on our resulting graph and get a graph with
guarantees on sparsity.

%We will focus first on uniform sampling and show that if a graph is dense,
%in the sense that his smallest cut/eigenvalue is not too small, it can be
%approximated. Next we will show that if a graph is "clusterable", i.e. is partitioned into loosely connected dense clusters, then we can recover these clusters after enough samples (that depend on the density of these clusters).

%Our goal is to build on the work in \cite{SCoaB} and
%We will show how random sampling and adaptive sampling techniques can be used
%for graph approximation in various scenarios. We will also present a new
%adaptive sampling algorithm for clustering, the \texttt{CLUS2K} algorithm, that
%outperforms current algorithms while avoids doing a costly full
%eigen-decomposition at each step.

%We will then show that (under stricter requirements) one can achieve a good spectral approximation, and how in this case the same theoretical guarantees that hold for uniform sampling, hold for adaptive sampling. Last we will show a fast and simple adaptive sampling algorithm that out performs previous works.

\section{A General Graph Approximation Guarantee}
For simplicity we will consider all graphs as full weighted graphs (with zero
weights at edges that do not exist) and so any graph will be defined by a set
of vertices $V$ and a weight matrix $W$. We will start with a few basic
definitions.
\begin{df}
Let $G=(V,W)$ be a weighted graph and $S\subset V$ a subset of vertices, then
the cut defined by $S$, $|\partial_G S|$, is the sum of all the weights of
edges that have exactly one endpoint in $S$.
\end{df}

\begin{df}
Let $G=(V,W)$ and $\tilde{G}=(V,\tilde{W})$ be two graphs on the same set of
vertices. $\tilde{G}$ is an $\epsilon$-cut approximation  of $G$
if for any $S\subset V$ we have $(1-\epsilon)|\partial_G S| \leq
|\partial_{\tilde{G}} S|\leq (1+\epsilon)|\partial_G S|$
\end{df}

\begin{df}\label{LapDef}
Let $G=(V,W)$. The graph Laplacian $L_G$ is
defined as $L_G=D-W$ where $D$ is a diagonal matrix with values $D_{ii} =
\sum\limits_{1\leq j \leq n} W_{ij}$. The normalized graph Laplacian $\mathcal{L}_G$ is
defined as $\mathcal{L}_G=D^{-1/2}(D-W)D^{-1/2}=D^{-1/2}L_G D^{-1/2}$.
\end{df}
The Laplacian holds much information
about the graph \cite{Chung}. One of the main connections of the Laplacian,
and in particular the quadratic form it represents, to the graph is through
the equation
\begin{equation}\label{quadLap}
x^TL_Gx = \frac{1}{2}\sum_{i,j=1}^nW_{ij}(x_i-x_j)^2
\end{equation}
When $x_i\in\{0,1\}$ this is easily seen to be the value of the cut
defined by $x$. Many spectral graph techniques, such as spectral
clustering, can be seen as a relaxation of such a discrete problem to $x\in\mathds{R}^n$.

\begin{df}
A graph $\tilde{G}$ is an $\epsilon-$spectral approximation of $G$ if
\begin{equation}\label{specApprox}
\forall x\in\mathds{R}^n\quad (1-\epsilon)x^TL_{\tilde{G}}x\leq x^TL_Gx \leq (1+\epsilon)x^TL_{\tilde{G}}x
\end{equation}
\end{df}
We note that this is different than requiring
$||L_G-L_{\tilde{G}}||\leq\epsilon$ (using the matrix 2-norm) as we can view
it as a multiplicative error vs. an additive error term. In particular, it
implies approximation of eigenvectors (using the min-max theorem
\cite{Chung}), which is relevant to many spectral algorithms, and includes
the approximation of the 2nd eigenvector (the focus of the analysis in
\cite{SCoaB}) as a special case Moreover, it implies cut approximation (via equation \ref{quadLap}),
and is in fact strictly stronger (see \cite{speilman} for a simple example of
a cut approximation which is not a spectral approximation). Therefore, we
will focus on spectral approximation in our theoretical results.

Our initial approximation strategy will be to uniformly at random sample a
subset $\tilde{E}$ of $m$ edges, i.e. pick $m$ edges without replacement and
construct a graph $\tilde{G}=(V,\tilde{W})$ with weights $\tilde{w}_{ji}
=\tilde{w}_{ij} = \frac{w_{ij}}{p}$  for any $(i,j)\in\tilde{E}$ and zero
otherwise, where $p= {m}/{\tbinom{n}{2}}$ is the probability any edge is
sampled. It is easy to see that the $\mathbb{E}[\tilde{W}]=W$.

We begin by providing a bound on $m$ which ensures an $\epsilon$-spectral
approximation. It is based on an adaptation of the work in \cite{speilman}, in
which the author considered picking each edge independently.
This differs from our setting, where we are interested in picking $m$ edges
without replacement, since in this case the probabilities of picking
different edges are no longer independent. While this seems like a serious
complication, it can be fixed using the notion of negative dependence:
\begin{df}
The random variables $X_1,..., X_n$ are said to be \emph{negatively
dependent} if for all disjoint subset $I,J\subset [n]$ and all nondecreasing
functions $f$ and $g$,
\begin{equation*}
\mathds{E}[f(X_i, i\in I)g(X_j, j\in J)]\leq \mathds{E}[f(X_i, i\in I)]\mathds{E}[g(X_j, j\in J)]
\end{equation*}
\end{df}
Intuitively, a group of random variables are negatively dependent if when some
of them have a high value, the others are more probable to have lower values.
If we pick $m$ edges uniformly, each edge that has been picked lowers the
chances of the other edges to get picked, so intuitively the probabilities
are negatively dependent.  The edge picking probabilities  have  been shown to be indeed negatively dependent in \cite{SCoaB}.

An important application of negative dependence is that the
Chernoff-Hoeffding bounds, which hold for sums of independent random
variables, also hold for negatively dependent variables.
See  supplementary material for details.

We can now state the general spectral approximation theorem:

\begin{theorem}\label{spectralBound}
Let $G$ be a graph with weights $w_{ij}\in[0,1]$ and $\tilde{G}$ its
approximation after sampling $m$ edges uniformly. Define $\lambda$ as the
second smallest eigenvalue of  $\mathcal{L}_G$ and
$k=\max\{\log(\frac{3}{\delta}),\log(n)\}$.  If $m\geq\binom{n}{2}
\frac{\left(\frac{12k}{\epsilon \lambda}\right)^2}{\min D_{ii}}$ then the
probability that $\tilde{G}$ is not an $\epsilon$-spectral approximation is
smaller then $\delta$.
\end{theorem}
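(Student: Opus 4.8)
The plan is to reduce the (global) spectral-approximation requirement to a single operator-norm bound on a normalized deviation matrix, and then to control that norm by concentration. Writing each single-edge Laplacian as $L_e=(e_i-e_j)(e_i-e_j)^T$ for $e=(i,j)$, we have $L_G=\sum_e w_e L_e$ and $L_{\tilde G}=\sum_{e}\frac{\chi_e}{p}w_e L_e$, where $\chi_e$ is the indicator that $e$ was sampled; since $\mathbb{E}[\chi_e]=p$, the matrix $L_{\tilde G}-L_G=\sum_e\big(\tfrac{\chi_e}{p}-1\big)w_eL_e$ is mean zero. Both $L_G$ and $L_{\tilde G}$ annihilate $\mathbf 1$, a fact I will use crucially.

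First I would reduce to operator norm. Set $M=D^{-1/2}(L_{\tilde G}-L_G)D^{-1/2}$ and substitute $y=D^{1/2}x$, so that $x^TL_Gx=y^T\mathcal L_G y$ and $x^T(L_{\tilde G}-L_G)x=y^TMy$. Both $\mathcal L_G$ and $M$ vanish on the kernel direction $D^{1/2}\mathbf 1$, so, decomposing $y$ into this direction plus an orthogonal part $y_\perp$, the eigengap gives $y^T\mathcal L_G y\ge\lambda\|y_\perp\|^2$ while $|y^TMy|\le\|M\|\,\|y_\perp\|^2$. Combining these, $|x^T(L_{\tilde G}-L_G)x|\le(\|M\|/\lambda)\,x^TL_Gx$ for every $x$, so it suffices to prove $\|M\|\le\epsilon\lambda$ with probability at least $1-\delta$; this yields the two-sided inequality of the definition (up to a harmless reparametrization of $\epsilon$ absorbed by the constant).

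The core step is then to bound $\|M\|$. Here $M=\sum_e(\tfrac{\chi_e}{p}-1)w_eA_e$ with $A_e=D^{-1/2}L_eD^{-1/2}$ a rank-one PSD matrix of norm $\|A_e\|=D_{ii}^{-1}+D_{jj}^{-1}\le 2/\min_i D_{ii}$, the coefficients lie in $[-1,1/p]$, and the matrix variance obeys $\|\sum_e\mathbb{E}[(\tfrac{\chi_e}{p}-1)^2w_e^2A_e^2]\|\le \frac{c}{p\,\min_iD_{ii}}$ for an absolute constant $c$, using $w_e^2\le w_e$, $\|\mathcal L_G\|\le 2$, and $\sum_e w_eA_e=\mathcal L_G$. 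A matrix concentration (Chernoff/Bernstein-type) inequality, whose dimension factor contributes the $\log n$ living inside $k$, then bounds $\|M\|$ by roughly $k/\sqrt{p\,\min_iD_{ii}}$, and the hypothesis $m\ge\binom{n}{2}(12k/\epsilon\lambda)^2/\min_iD_{ii}$, i.e. $p\,\min_iD_{ii}\ge(12k/\epsilon\lambda)^2$, is exactly what forces this below $\epsilon\lambda$; the constant $12$ and the square on $k$ absorb the loss in combining the variance and large-deviation contributions.

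The main obstacle is that the edges are drawn without replacement, so the $\chi_e$ are not independent and off-the-shelf matrix concentration does not directly apply. The escape is negative dependence, established in the excerpt: the Chernoff--Hoeffding machinery survives, so I would either invoke a matrix concentration bound valid under negative dependence, or, staying within the scalar Chernoff tool the paper provides, reduce the operator-norm control to scalar sums (through the action of $M$ that survives the kernel projection), at the cost of the extra logarithmic and constant factors reflected in $k^2$ and in the constant $12$. Verifying that negative dependence is preserved through the rank-one decomposition driving the concentration, and tracking the resulting constants, is where the real work lies.
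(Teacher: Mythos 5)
Your reduction step coincides with the paper's: your substitution $y=D^{1/2}x$, the vanishing of cross terms on the kernel direction $D^{1/2}\mathbf{1}$, and the eigengap bound $y^T\mathcal{L}_Gy\geq\lambda\|y_\perp\|^2$ together constitute a proof of Lemma \ref{speilmanLemma}, so up to the reparametrization of $\epsilon$ you note, this part is exactly the paper's first move. Where you genuinely diverge is the concentration step. The paper does not use matrix concentration at all for Theorem \ref{spectralBound}: it adapts the trace-method proof of Theorem 6.1 of \cite{speilman} (bounding a high moment $\mathds{E}[\mathrm{Tr}(M^{2k})]$ by counting closed walks, following \cite{Vu}), and negative dependence enters only through \emph{scalar} expectations of products of edge variables (the modified Lemmas 6.4 and 6.6, with the weighted choice of $\sigma(s)$ proportional to $w_{v_{s-1},v_s}$). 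That is why the hypothesis carries a $k^2$: it reflects the moment-method loss, not a Bernstein-type tail.

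Your primary escape route has a genuine soft spot as stated: there is no off-the-shelf ``matrix concentration bound valid under negative dependence'' to invoke --- the paper itself remarks, in the proof of Theorem \ref{specClusterable}, that negative dependence has no obvious extension to random matrices. The correct patch is different and specific to your setting: uniform sampling without replacement is precisely the situation covered by the Gross--Nesme transfer \cite{sampWithout}, which bounds the trace Laplace transform of a without-replacement sum by that of the corresponding with-replacement (independent) sum; the paper uses exactly this device to bound $\|\tilde{L}^{out}\|$ in Theorem \ref{specClusterable}, and it applies verbatim to your centered sum $M=\sum_e(\chi_e/p-1)w_eA_e$ since one may center the finite collection of matrices before sampling. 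With that substitution, your variance computation is sound ($w_e^2\leq w_e$, $A_e^2\preceq\|A_e\|A_e$, $\sum_e w_eA_e=\mathcal{L}_G$, $\|\mathcal{L}_G\|\leq 2$ give $\sigma^2\leq 4/(p\min_iD_{ii})$), and matrix Bernstein yields $\|M\|=\mathcal{O}\bigl(\sqrt{k/(pd)}+k/(pd)\bigr)$ with $d=\min_iD_{ii}$, which under the hypothesis $pd\geq(12k/\epsilon\lambda)^2$ is comfortably below $\epsilon\lambda$ --- indeed your route needs only $pd=\Omega(k/(\epsilon\lambda)^2)$, so properly executed it proves a slightly \emph{stronger} statement than the paper's ($k$ in place of $k^2$). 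Your fallback option (reducing the operator norm to scalar Chernoff sums) is not a shortcut: an $\epsilon$-net over the sphere costs an exponential union bound, and carried out honestly the scalar reduction \emph{is} the trace method, i.e., the paper's proof; the ``real work'' you defer at the end is exactly the content of the modified Lemmas 6.4 and 6.6 in the supplementary material.
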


\begin{proof}[Proof sketch]
The proof is based on an adaptation of part of theorem 6.1 from
\cite{speilman}. The two main  differences are that we use negative
dependence instead of independence, and a weighted graph instead of an unweighted
graph. The proof uses the following  lemma

\begin{lemma}\label{speilmanLemma}
Let $\mathcal{L}_G$ be the normalized Laplacian of $G$ with second eigenvalue
$\lambda$. If $||D^{-1/2}(L_G-L_{\tilde{G}})D^{-1/2}||\leq \epsilon$ then
$\tilde{G}$ is an $\sigma$-spectral approximation for
$\sigma=\frac{\epsilon}{\lambda-\epsilon}$.
\end{lemma}

The next part is to bound $||D^{-1/2}(L_G-L_{\tilde{G}})D^{-1/2}||$ using a
modified version of the trace method \cite{Vu} in order to bound this norm.
See the supplementary material for more details.
\end{proof}
Stating the result in a simplified form, we have that if $\min D_{ii} =
\Omega(n^\alpha)$, then one gets an $\epsilon$-approximation guarantee using
$m=\mathcal{O}\left(n^{2-\alpha}\left(
\frac{\log(n)+\log(1/\delta)}{\epsilon\lambda}\right)^2\right)$ sampled edges.

The main caveat of theorem \ref{spectralBound} is that it only leads to a non-trivial guarantee $(m\ll n^2)$ when $\alpha>0$ and
$\lambda$ is not too small. Most algorithms, such as spectral clustering, assume that the graph has $k\geq2$ relatively small eigenvalues, in the ideal case (more then one connected component) we even have $\lambda=0$ . %Before we try to resolve this, we will show next that there is a fundamental difficulty in approximating certain graphs. The following will also make the dependence on $\min D_{ii} $ clearer.  %\begin{lemma} \label{combLemma}
%Let $G$ be an undirected graph with $n$ vertices and minimal cut $c>0$. %For all $\alpha\geq1$ the number of cuts with weight smaller of equal to %$\alpha c$ is less then $n^{2\alpha}$.
%\end{lemma}
%The lemma is proven in \cite{KarLemma} for graphs with integer weights, %but the extension to any positive weights is trivial by scaling and rounding.
Unfortunately, we will now show that this is unavoidable, and that the bound above is essentially optimal (up to log factors) for graphs with bounded $\lambda>C>0$, i.e. expanders. In the next section, we will show how a few reasonable assumptions allows us to recover non-trivial guarantees even in the regime of small eigenvalues.
%\section{Lower Bound on the Number of Samples}\label{lowerBound}

Since spectral approximation implies cut approximation, we will use this to
find simple bounds on the number of edges needed for both approximations. We
will show that a necessary condition for any approximation is that the
minimal cut is not too small, the intuition being that that even finding a single edge (for connectedness) on that cut can be hard, and get a lower bound on the number of samples
needed.  For this we will need to following simple lemma (which follows
directly from the linearity of expectation)

\begin{lemma}\label{interLemma}
Let $X$ be a finite set, and $Y\subset X$. If we pick a subset $Z$ of size
$m$ uniformly at random then $\mathds{E}\left [ |Z\cap Y|\right]=\frac{m\cdot
|Y|}{|X|}$
\end{lemma}

We will now use this to prove a lower bound on the number of edges sampled
for binary weighted graphs (i.e. unweighted graph) $w_{ij}\in \{0,1\}$ .

\begin{theorem}\label{KarLower}
Let G be an binary weighted graph with minimal cut weight $c$$>$$0$. Assume
$\tilde{G}$ was constructed by sampling
$m$$<$$\tbinom{n}{2}\frac{(1-\delta)}{c}$ edges, then for any
$\epsilon$$<$$1$, $P\left( \tilde{G}~\text{is not an $\epsilon$-cut approximation of $G$}\right)>\delta$.
\end{theorem}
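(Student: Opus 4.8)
The plan is to isolate a single ``hard'' cut and argue that, with probability exceeding $\delta$, the sampled graph assigns it zero weight, which already rules out $\epsilon$-cut approximation. Concretely, let $S\subset V$ achieve the minimal cut, so that $|\partial_G S| = c$. Because $G$ is unweighted, the edges crossing $S$ are precisely $c$ distinct weight-$1$ edges; I would collect them into a set $Y$ sitting inside the universe $X$ of all $\binom{n}{2}$ candidate pairs from which we sample. The first observation is that for any $\epsilon < 1$ the lower cut inequality $(1-\epsilon)|\partial_G S| \le |\partial_{\tilde G} S|$ reads $(1-\epsilon)c \le |\partial_{\tilde G} S|$ with $(1-\epsilon)c > 0$; hence $\tilde G$ can be an $\epsilon$-cut approximation only if $|\partial_{\tilde G} S| > 0$, i.e.\ only if at least one edge of $Y$ is actually sampled. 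Thus the event ``no edge of $Y$ is sampled'' is contained in the event ``$\tilde G$ is not an $\epsilon$-cut approximation.''

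To control the former event, I would apply Lemma \ref{interLemma} with this choice of $X$ and $Y$: writing $Z=\tilde E$ for the sampled set of size $m$, linearity of expectation gives $\mathbb{E}[|Z\cap Y|] = \frac{m|Y|}{|X|} = \frac{mc}{\binom{n}{2}}$. Since $|Z\cap Y|$ is a nonnegative integer-valued random variable, Markov's inequality yields $P(|Z\cap Y|\ge 1) \le \mathbb{E}[|Z\cap Y|] = \frac{mc}{\binom{n}{2}}$, and therefore
\[
P(|Z\cap Y| = 0) \;\ge\; 1 - \frac{mc}{\binom{n}{2}}.
\]
Plugging in the hypothesis $m < \binom{n}{2}\frac{1-\delta}{c}$ gives $\frac{mc}{\binom{n}{2}} < 1-\delta$, so the right-hand side exceeds $\delta$. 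Chaining this with the event containment from the previous paragraph produces $P(\tilde G \text{ is not an } \epsilon\text{-cut approximation}) \ge P(|Z\cap Y| = 0) > \delta$, as claimed.

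The argument is short, so the genuine subtleties lie in the setup rather than in any heavy computation. The step I would be most careful about is the reduction in the first paragraph: it relies on choosing the minimal cut specifically, so that missing every one of its edges is the cheapest way to fail, and on the strict inequality $\epsilon < 1$, which is exactly what makes $|\partial_{\tilde G} S| = 0$ a genuine violation rather than a borderline case. A secondary point worth stating explicitly is that the sampling is without replacement, so the indicator variables $\mathbf{1}[e\in Z]$ are \emph{not} independent; this is precisely why I route the count through Lemma \ref{interLemma}, which needs only linearity, instead of multiplying per-edge miss probabilities. Everything after that is a one-line application of Markov's inequality.
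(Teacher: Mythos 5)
Your proposal is correct and follows essentially the same route as the paper's own proof: isolate the minimal cut, compute $\mathds{E}[|Z\cap Y|]=mc/\tbinom{n}{2}$ via Lemma \ref{interLemma}, apply Markov's inequality to bound $P(|Z\cap Y|\geq 1)$, and observe that an empty intersection forces a zero cut weight in $\tilde{G}$, violating the lower approximation inequality for any $\epsilon<1$. If anything, your write-up is slightly more careful than the paper's, since you note explicitly that the event containment survives the $1/p$ rescaling of sampled weights and that the dependence among sampled edges is immaterial because only linearity of expectation is used.
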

\begin{proof}
Let $Y$ be all the edges in a minimal cut and let $\tilde{E}$ be the edges
sampled. Since the weights are binary the weight of this cut in $\tilde{G}$
is the number of edges in $Y\cap \tilde{E}$. From lemma \ref{interLemma} we
know that $\mathds{E}\left [ |Y\cap \tilde{E}|\right]=mc /
\tbinom{n}{2}<1-\delta$. From the Markov's inequality we get that
$P\left(|Y\cap \tilde{E}|\geq 1\right)<1-\delta$. If $|Y\cap \tilde{E}|<1$
then the intersection is empty and we do not have an $\epsilon$-approximation
for any $\epsilon<1$ proving $P\left( \tilde{G}~\text{is an $\epsilon$-cut approximation of $G$}\right)<1-\delta$.
\end{proof}

This theorem proves that in order to get any reasonable approximation with a
small budget $m$ (at least with uniform sampling) the original graphs minimal
cut cannot be too small and that $\Omega({n^2}/{c})$ samples are needed.
Comparing this to theorem \ref{spectralBound} (noticing $\min_iD_{ii}\geq c$) we see that, for graphs with a lower bound on $\lambda,$ by sampling within a logarithmic
factor of this lower bound is sufficient to ensure a good cut approximation.

\section{Clusterable Graphs}
Clustering algorithms assume a certain structure of the graph, generally they assume $k$ strongly connected components, i.e. the clusters, with weak connections between them (the precise assumptions vary from algorithm to algorithm). While this is a bad scenario for approximation, as this normally means a small minimal cut (and for spectral clustering a small $\lambda$), we will show how approximation can be used (on the inner-cluster graphs) to obtain useful results. We give two results, one geared towards spectral approximation, and the other towards cut approximation.

\subsection{Spectral Approximation}

\begin{df}
Assume a graph $G=(V,W)$ consists of $k$ clusters, define $W^{in}$ as the
block diagonal matrix consisting of the similarity scores between
same-cluster elements.
 \begin{equation*}
\mathbf{W^{in}} = \begin{bmatrix}
\mathbf{W}^{1} & \mathbf{0} & \cdots &\mathbf{0}\\
\mathbf{0} & \mathbf{W}^{2} & \cdots &\mathbf{0}\\
\vdots          & \vdots          & \ddots &\vdots \\
\mathbf{0} & \mathbf{0} & \cdots &\mathbf{W}^{k}\end{bmatrix}
\end{equation*}
and $W^{out}=W-W^{in}$  the off-diagonal elements.\\
\end{df}
\begin{assum}\label{assumLambda}
Define $\lambda^{in}=\min\limits_{1\leq i\leq k}\lambda_2(\mathcal{L}^i)$,
the smallest of all the second normalized eigenvalues over all $\mathcal{L}^i=\mathcal{L}_{W^i}$. Assume
$\lambda^{in}>C>0$ for some constant $C$.
\end{assum}
\begin{assum}\label{assumD}
$\min D^{in}_{ii} = \Omega(n^\alpha)$ where $D^{in}_{ii} = \sum_j
W^{in}_{ij}$ for some $\alpha>0$.
\end{assum}
\begin{assum}\label{outsmall}
Assume that $||W^{out}|| = \mathcal{O}(n^\beta)$ for some $\beta<\alpha$.
\end{assum}
Assumption \ref{assumLambda} implies well connected clusters, while
assumption \ref{assumD} excludes sparse, well-connected graphs, which we have
already shown earlier to be hard to approximate.
Assumption \ref{outsmall} essentially requires the between-cluster
connections to be relatively weaker than the within-cluster connections.

Under these assumptions, the next theorem proves the clusters can be found.
\begin{theorem}\label{specClusterable}
Let P be the zero eigenspace of $L^{in}=D^{in}-W^{in}$ corresponding to the
$k$ clusters, $\tilde{L}$ the Laplacian of the graph we get by sampling
$m=\tilde{\mathcal{O}}(n^{2-\gamma})$ edges for $\beta\leq\gamma\leq\alpha$,
and $Q$ the space spanned by the first $k$ eigenvectors of $\tilde{L}$. Under
previous assumptions, $||\sin\left(\Theta\left(P,Q\right)\right)|| =
\mathcal{O}(\frac{n^\beta+n^\gamma}{n^{\alpha}})$.
\end{theorem}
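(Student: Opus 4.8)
The plan is to view $\tilde L$ as a perturbation of the idealized within-cluster Laplacian $L^{in}$ and to apply the Davis--Kahan $\sin\Theta$ theorem, which bounds the rotation between the bottom-$k$ eigenspaces of two symmetric matrices by the ratio (perturbation norm)$/$(eigengap). Accordingly the argument splits into three tasks: identify $P$ and the eigengap of $L^{in}$; bound the total perturbation $\tilde L - L^{in}$; and combine the two via Davis--Kahan.

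First I would analyze $L^{in}$. It is block-diagonal with blocks $L^i = D^i - W^i$, and Assumption~\ref{assumLambda} forces each cluster to be connected (since $\lambda_2(\mathcal{L}^i) > C > 0$), so the kernel of each block is one-dimensional (the cluster indicator). Hence $\ker L^{in} = P$ has dimension exactly $k$ and is the bottom-$k$ eigenspace. The crucial quantity is the gap to the next eigenvalue, $\min_i \lambda_2(L^i)$. Using $L^i = (D^i)^{1/2}\mathcal{L}^i (D^i)^{1/2}$ together with Assumptions~\ref{assumLambda} and~\ref{assumD}, I would show $\lambda_2(L^i) \geq \lambda_2(\mathcal{L}^i)\,\min_j D^{in}_{jj} = \Omega(n^\alpha)$; thus $L^{in}$ has a spectral gap $g = \Omega(n^\alpha)$ separating $P$ (eigenvalue $0$) from the rest of its spectrum.

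Next I would decompose the perturbation as $\tilde L - L^{in} = (L_G - L^{in}) + (\tilde L - L_G) = L^{out} + (\tilde L - L_G)$ and bound each piece. For the between-cluster term, $L^{out} = D^{out} - W^{out}$ is itself a graph Laplacian, so $\|L^{out}\| = \mathcal{O}(\|W^{out}\|) = \mathcal{O}(n^\beta)$ by Assumption~\ref{outsmall} (here one also checks that $\|D^{out}\|$ is controlled, which holds because the off-diagonal weights are weak). For the sampling term I would reuse the concentration machinery behind Theorem~\ref{spectralBound}: writing $\tilde L = \sum_{e \in \tilde E} \frac{w_e}{p}\, b_e b_e^\top$, with $b_e = e_i - e_j$ for the edge $e=(i,j)$ and $e_i$ the standard basis vectors, one has $\mathbb{E}[\tilde L] = L_G$, and the modified trace method for negatively dependent samples gives, with probability at least $1-\delta$ and $m = \tilde{\mathcal{O}}(n^{2-\gamma})$ (so $1/p = \mathcal{O}(n^\gamma)$), the estimate $\|\tilde L - L_G\| = \tilde{\mathcal{O}}(n^\gamma)$. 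Setting $E = \tilde L - L^{in}$ then gives $\|E\| = \mathcal{O}(n^\beta + n^\gamma)$, which is $o(g)$ since $\beta,\gamma < \alpha$; by Weyl's inequality the bottom-$k$ eigenvalues of $\tilde L$ stay below the gap, so $Q$ is well defined as a bottom-$k$ eigenspace, and Davis--Kahan yields $\|\sin\Theta(P,Q)\| \leq \|E\|/(g - \|E\|) = \mathcal{O}\!\big((n^\beta + n^\gamma)/n^\alpha\big)$, as claimed.

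The main obstacle is the sampling estimate $\|\tilde L - L_G\| = \tilde{\mathcal{O}}(n^\gamma)$. The per-edge terms have norm $\Theta(1/p) = \Theta(n^\gamma)$, but a naive matrix-Bernstein variance contribution scales like $\sqrt{\max_i D_{ii}/p}$, which could a priori exceed $n^\gamma$; carefully bounding the variance against the degree profile, and doing so through the negative-dependence trace method rather than independent matrix concentration, is the delicate part. A secondary technical point is the conversion between the normalized eigenvalue guarantee of Assumption~\ref{assumLambda} and the unnormalized gap $\Omega(n^\alpha)$ that Davis--Kahan consumes.
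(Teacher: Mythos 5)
There is a genuine gap, and it sits exactly at the step you flagged as delicate: the additive estimate $\|\tilde{L}-L_G\|=\tilde{\mathcal{O}}(n^\gamma)$ is false, and no refinement of the trace method or negative dependence can rescue it, because already the diagonal refutes it. The $(i,i)$ entry of $\tilde{L}-L_G$ is $\tilde{D}_{ii}-D_{ii}$, a sum over the $\approx D_{ii}$ edges at node $i$ of terms of size $w_e/p$ each retained with probability $p$; its variance is of order $D_{ii}/p=\Theta(n^{\alpha+\gamma})$ when $D_{ii}=\Theta(n^\alpha)$ and $1/p=\Theta(n^\gamma)$, and since $\|A\|\geq\max_i|e_i^TAe_i|$ for symmetric $A$, with high probability $\|\tilde{L}-L_G\|=\Omega(n^{(\alpha+\gamma)/2})\gg n^\gamma$ whenever $\gamma<\alpha$. (Negative dependence only shrinks this variance by a $(1-p)$ factor; it is a genuine scalar fluctuation, not an artifact of a loose matrix bound. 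Also note that the machinery behind Theorem \ref{spectralBound} gives a \emph{multiplicative} bound on $\|D^{-1/2}(L_G-\tilde{L})D^{-1/2}\|$, which translates additively only into $\epsilon\cdot\max_i D_{ii}$, not $n^\gamma$.) Plugging the true order into your Davis--Kahan step yields $\mathcal{O}\bigl((n^\beta+n^{(\alpha+\gamma)/2})/n^\alpha\bigr)$, strictly weaker than the claimed rate: at $\gamma=\beta$ it fails to recover the full-graph guarantee $\mathcal{O}(n^{\beta-\alpha})$.

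The paper's proof avoids this by never measuring the within-cluster sampling error in an additive norm. It applies the sin-theta theorem to the pair $(\tilde{L}^{in},\,\tilde{L}=\tilde{L}^{in}+\tilde{L}^{out})$ rather than $(L^{in},\tilde{L})$: the perturbation is then only $\tilde{L}^{out}$, while $P$ coincides with the kernel of $\tilde{L}^{in}$ once the sampled clusters remain connected, and the "signal" $\mu_2(\tilde{L}^{in})=\Omega(n^\alpha)$ is obtained \emph{multiplicatively} from Theorem \ref{spectralBound} with the constant accuracy $\epsilon=3/4$, combined with the normalized-to-unnormalized conversion $\mu_2\geq\lambda_2\min_iD_{ii}$ (lemma \ref{nor_unnor}) --- the same conversion you use, but applied to the sampled Laplacian. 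A relative guarantee is insensitive to the large degree fluctuations that kill your additive bound. Then only $\|\tilde{L}^{out}\|=\mathcal{O}(n^\beta+n^\gamma)$ needs an absolute bound, proved via the matrix Chernoff inequality adapted to sampling without replacement (Gross--Nesme); the paper explicitly notes that negative dependence has no obvious extension to random matrices, so the "negative-dependence trace method for matrices" you invoke is not actually available. Your identification of $P$, the $\Omega(n^\alpha)$ gap, and the $\mathcal{O}(n^\beta+n^\gamma)$ target for the between-cluster term all match the paper; the single repair needed is to route the within-cluster sampling error through the multiplicative spectral approximation by perturbing around $\tilde{L}^{in}$ instead of $L^{in}$. (A smaller shared caveat: the spectral norm bound $\|W^{out}\|=\mathcal{O}(n^\beta)$ does not by itself control the row sums $\|D^{out}\|$ --- a star-shaped $W^{out}$ has row sums $\sqrt{n}$ times its spectral norm --- so your parenthetical check on $\|D^{out}\|$ is a real obligation, though the paper's route sidesteps $L^{out}$ as well, needing only $\|\mathds{E}[\tilde{L}^{out}]\|$.)
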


We simplified the statement in order not to get overwhelmed by notation.
$\Theta\left(P,Q\right)$ is a diagonal matrix whose diagonal values
correspond to the canonical angles between the subspaces $P$ and $Q$, and
$||\sin\left(\Theta\left(P,Q\right)\right)||$ is a common way to measure
distance between subspaces.

\begin{proof}[Proof sketch]
If $\gamma=0$, i.e. $Q$ was spanned by eigenvectors of the full $L$, then the
theorem would be true by the sin-theta theorem \cite{sin_theta} using our
assumptions. We need to show that this theorem can be used with $\tilde{L}$.
The sin-theta theorem states that
$||\sin\left(\Theta\left(P,Q\right)\right)||\leq\frac{||\tilde{L}^{out}||}{\mu_2}$
where $||\tilde{L}^{out}||$ the "noise" factor, and $\tilde{\mu}_2$ the
\underline{unnormalized} second eigenvalue of $\tilde{L}^{in} $ the "signal"
factor. Using theorem \ref{spectralBound} and our first two assumptions we
can approximate each $L_{W^i}$ and use to show that
$\tilde{\mu}_2=\Omega(n^\alpha)$. We now only need to show
$||\tilde{L}^{out}||=\mathcal{O}(n^\beta+n^{\gamma})$. This can be done using
the matrix Chernoff  inequality \cite{tropp}, by applying a result in \cite{sampWithout} that shows how it can be adapted to sampling without replacements. We note that the  result in \cite{sampWithout} is limited to sampling without replacements as negative dependence has no obvious extension to random matrices. For further details see the
supplementary material
\end{proof}

This gives us a tradeoff between the number of edges sampled and the error.
The theoretical guarantee from the sin-theta theorem for the complete graph
is $\mathcal{O}(n^\beta/n^{\alpha})$ so for $\gamma=\beta$ we have the same
guarantee as if we used the full graph. For $n$ large enough one can get
$||\sin\left(\Theta\left(P,Q\right)\right)||$ as small as desired by using
$\gamma=\alpha-\epsilon$.
\subsection{Cut Approximation}
Cut based clustering, such as \cite{cutClus}, have  a different natural notion of "clusterable". We will assume nothing on eigenvalues, making this more general than the previous section.
\begin{assum}\label{assCut1}
Assume $G$ can be partitioned into $k$ clusters, within which the minimal cut
is at least $c_{in}$. Furthermore, assume that any cut separating between the
clusters of $G$ (but does not split same cluster elements) is smaller then
$c_{out}$, and that $c_{in}>4 c_{out}$.
\end{assum}

These  assumptions basically require the inner-cluster connections to be relatively stronger than between-cluster connections.
\begin{theorem}\label{clustering}
Let $G$ be a graph with weights $w_{ij}\in[0,1]$ and $\tilde{G}$ its approximation after observing $m$ edges. Under previous assumptions if $m=\tilde\Omega\left(\frac{n^2}{c_{in}}k\ln(\frac{1}{\delta})\right)$ then the cuts separating the clusters are smaller then any cut that cuts into one of the clusters.
\end{theorem}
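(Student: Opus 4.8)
The plan is to split every cut of $\tilde G$ into its within-cluster part (crossing edges whose two endpoints lie in the same cluster) and its between-cluster part, and to reduce the claim to two one-sided estimates that I then combine using the gap $c_{in}>4c_{out}$ from Assumption \ref{assCut1}. A cut that cuts into a cluster induces a non-trivial cut on at least one within-cluster graph $G^i$, so I will lower bound its within-cluster part; a cut that only separates clusters has an empty within-cluster part, so I will upper bound its (purely between-cluster) weight. Concretely, fixing $\epsilon=\tfrac12$, it suffices to show that with the stated $m$, with probability at least $1-\delta$, simultaneously (i) every within-cluster cut of $\tilde G$ has weight at least $(1-\epsilon)c_{in}=\tfrac12 c_{in}$, and (ii) every cluster-respecting cut of $\tilde G$ has weight strictly below $\tfrac12 c_{in}$. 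Then every cluster-cutting cut exceeds $\tfrac12 c_{in}$ and every cluster-respecting cut lies below it, giving the desired ordering; the role of $c_{in}>4c_{out}$ is precisely to place the expected between-cluster weight $c_{out}$ comfortably below the threshold $\tfrac12 c_{in}$.

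For (i) I would run a cut-sparsification argument in the style of \cite{KarSpar} on each within-cluster graph $G^i$ separately. Each $G^i$ has minimal cut at least $c_{in}$, and the sampling probability $p=m/\binom{n}{2}=\tilde\Omega(k\ln(1/\delta)/c_{in})$ exceeds the threshold $\Theta(\log n/(\epsilon^2 c_{in}))$ needed to preserve all cuts. For a fixed within-cluster cut of value $t\ge c_{in}$ its rescaled weight concentrates around $t$ by a Chernoff bound, which remains valid although we sample without replacement because the edge indicators are negatively dependent, exactly as used for Theorem \ref{spectralBound}. I would then union bound over all within-cluster cuts via Karger's cut-counting bound (the number of cuts of value at most $\alpha c_{in}$ is $n^{O(\alpha)}$), grouping cuts by size so that larger cuts, which carry more edges, absorb their larger absolute fluctuations. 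This yields, with probability at least $1-\delta/2$, a $(1\pm\epsilon)$ approximation of every within-cluster cut, and in particular the lower bound $\tfrac12 c_{in}$.

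For (ii) the simplification is that there are only $2^{k-1}-1$ non-trivial cluster-respecting cuts, one per bipartition of the $k$ clusters, each of weight strictly below $c_{out}$ in $G$ by Assumption \ref{assCut1}. For a single such cut its weight in $\tilde G$ is $X=\tfrac1p\sum_e w_e\mathds{1}[e\in\tilde E]$ over its between-cluster edges, with $\mathds{E}[X]<c_{out}<\tfrac14 c_{in}$. I would control $P(X\ge \tfrac12 c_{in})$ by a Bernstein inequality (again valid under negative dependence): the variance is at most $\mathds{E}[X]/p<c_{out}/p$ and each rescaled term is at most $1/p$, so the deviation is at least $\tfrac14 c_{in}$ and the tail is $\exp(-\Omega(p\,c_{in}))=\exp(-\Omega(k\ln(1/\delta)\log n))$, the $\log n$ being the factor hidden inside $\tilde\Omega$. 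Union bounding over the $2^{k-1}$ cluster-respecting cuts costs a factor $2^k=e^{k\ln 2}$, which is exactly what the $k$ inside $m$ is there to absorb, leaving the failure probability of (ii) at most $\delta/2$.

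The step I expect to be the main obstacle is (ii), precisely because of the rescaling by the large factor $1/p$: a single unluckily sampled between-cluster edge already contributes $1/p$, so a naive multiplicative Chernoff bound is too weak and the tail must be argued through Bernstein with the variance estimate above, while the budget $m$ must simultaneously carry the factor $k$ (to beat the $2^k$ union bound over cluster-respecting cuts) and the hidden logarithmic factor (to keep $1/p$ small enough that the maximal-term contribution does not dominate the tail). A secondary technical point, shared with Theorem \ref{spectralBound}, is that all the Chernoff and Bernstein steps rely on the negative dependence of the without-replacement edge indicators rather than on independence. Combining (i) and (ii) by a final union bound and invoking $c_{in}>4c_{out}$ completes the argument.
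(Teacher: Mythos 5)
Your proposal is correct and follows essentially the same route as the paper's own proof: lower-bound every within-cluster cut of $\tilde{G}$ by $c_{in}/2$ via the Karger-style cut-approximation result (the paper's Theorem \ref{origKar}, applied to each cluster with a union bound over the $k$ clusters), then upper-bound each of the roughly $2^{k}$ cluster-respecting cuts below the same threshold $c_{in}/2$ using a negative-dependence concentration bound, with the $k\ln(\frac{1}{\delta})$ factor in $m$ absorbing the $2^k$ union bound, exactly as you describe. The only deviation is that you use Bernstein in step (ii) where the paper applies the multiplicative Chernoff--Hoeffding bound of Theorem \ref{hoeffding} with $\tilde{\epsilon}>\frac{c_{in}}{4c_{out}}$, which already yields the same $\exp\left(-\Omega(p\,c_{in})\right)$ tail, so your stated worry that a multiplicative Chernoff bound is too weak for that step is unfounded (though your Bernstein variance estimate is also valid).
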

\begin{proof}[Proof Sketch]
We can use  cut approximation  for the clusters themselves so $\tilde{c}_{in}\geq c_{in}/2$. Using the Chernoff bound and union bound for the $2^k$ cuts between clusters, we get that none of them is greater then $c_{in}/2$.
See the supplementary material for full proof.\end{proof}

In the supplementary material, we provide a more in-depth analysis of cut approximation  including an analog of theorem \ref{spectralBound}.
\section{Adaptive Sampling and the Clus2K Algorithm}

Theorem \ref{KarLower} states that, with uniform sampling and no prior assumptions of the graph structure, we need at least
$\Omega(n^2/c)$ where $c$ is the weight of the smallest cut. What if we had
an adaptive algorithm instead of just uniform sampling? It is easy to see
that for some graphs the same (up to a constant) lower bound holds. Think of
a graph with $2n$ vertices, consisting of two cliques that have c randomly
chosen edges connecting them. Let's assume further that some oracle told us
which vertex is in which clique, so any sensible algorithm would sample only
edges connecting the cliques. As the edges are random, it would take
$\Theta\left( {n^2}/c \right)$ tries just to hit one edge needed for any good
approximation. However, in some cases an adaptive scheme can reduce the
number of samples, as we now turn to discuss in the context of clustering.

Consider a similar toy problem - we have a graph which is known to consist of two connected
components, each a  clique of size $n$ and we wish to find these clusters. We can run the
uniform sampling algorithm until we have only two connected components and
return them. How many edges do we need to
sample until we get only two connected components? If we look only at one
clique, then basic results in random graph theory \cite{randomGraph} show
that with high probability, the number of edges added before we get a
connected graph is $\Theta(n\log(n))$ which lower bounds the number of
samples needed. To improve on this we can use an adaptive algorithm with the following scheme:
at each iteration, pick an edge at random connecting the smallest connected
component to some other connected component.  At each step we have at
least a probability of $\frac{1}{3}$ to connect two connected components.
This is because there are $n$ nodes in the wrong cluster, and at least
$\frac{n}{2}$ in the right cluster (since we pick the smallest connected
component). Therefore with high probability the number of steps needed to
decrease the number of connected components from $2n$ to two is $\Theta(n)$.

This argument leads us to consider adaptive sampling schemes, which
iteratively sample edges according to a non-uniform distribution.
Intuitively, such a distribution should place more weight on edges which may
be more helpful in approximating the structure of the original graph. We
first discuss how we can incorporate \emph{arbitrary} non-uniform
distributions into our framework. We then propose a specific
non-uniform distribution, motivated by the toy example above, leading to a
new algorithm for our setting in the context of clustering.

One approach to incorporate non-uniform distributions is by unbiased
sampling, where we re-scale the weights according to the sampling
probability. This means that the weights are unbiased estimates of the actual
weights. Unfortunately, this re-scaling is not easy to compute in general
when sampling without replacement, as the probability of sampling an edge  is a marginal distribution over all the algorithm's
possible trajectories. Sampling with replacement is much easier, since it
only depends on the sampling probability in the current iteration. Moreover,
as long as we sample only a small part of all edges, the risk of re-sampling
an already-sampled edge is negligible. Finally, one can show that whatever
the non-uniform distribution, a simple modification (adding with probability
half a uniform sample) suffices for cut approximation to hold. Unfortunately, we found this approach to work poorly in practice, as it was unstable and oscillated between good and bad clustering long after a good clustering is initially found. 

Due to these issues, we consider a \emph{biased} sampling without replacement approach, where we
mix the non-uniform distribution with a uniform distribution (as proposed
earlier) on unseen edges, but do not attempt to re-scale of weights. More specifically,
consider any adaptive sampling algorithm which picks an unseen edge at step
$i+1$ with probability $p(e;\tilde{G}_i)$ that depends on the graph
$\tilde{G}_i$ seen so far.  We will consider a modified distribution that
with probability $0.5$ picks an unseen edge uniformly, and with probability
$0.5$ picks it according to $p(e;\tilde{G}_i)$.
 While biased sampling can ruin approximation guarantees, in the clustering scenarios one can show similar results to theorem \ref{clustering} (under stronger conditions) for any adaptive sampling scheme. The theoretical guarantees for adaptive sampling are in the supplementary material. 
\subsection{\texttt{CLUS2K} Algorithm}

We now turn to consider a specific algorithmic instantiation, in the context
of clustering. Motivated by the toy example from earlier, we consider a
non-uniform distribution which iteratively attempts to connect clusters in
the currently-observed graph, by picking edges between them. These clusters
are determined by the clustering algorithm we wish to use on the approximated graph, and are incrementally
updated after each iteration. Inspired by the common practice (in computer
vision) of over-segmentation, we use more clusters than the desired number of
clusters $k$ ($2k$ in our case). Moreover, as discussed earlier, we mix this
distribution with a uniform distribution. The resulting algorithm, which we
denote as \texttt{CLUS2K}, appears as Algorithm \ref{alg:example} below.

\begin{algorithm}[!ht]
   \caption{\texttt{CLUS2K}}
   \label{alg:example}
\begin{algorithmic}
   \STATE {\bfseries Input:} budget $b$, number of clusters $k$\\
   \STATE {\bfseries Initialize:} $S=\{(i,j)\in\{1,...,n\}^2:\,i<j\}$, $\tilde{W}$ the zero matrix.
   \FOR{$t=1,...,b$}
        \STATE With probability 1/2 pick $(i,j)\in S$ uniformly;
        \STATE Otherwise:
        \STATE \qquad $C_1,...,C_{2k}\leftarrow$cluster $\tilde{W}$ into $2k$ clusters;\\
        \STATE \qquad pick two distinct clusters $C_l$ and $C_m$ uniformly at random;\\
        \STATE \qquad pick $(i,j)\in S$ connecting $C_l$ and $C_m$ uniformly at random;\\
        \STATE Set $\tilde{w}_{ij} = \tilde{w}_{j,i}= w_{ij}; S = S\backslash (i,j);$
   \ENDFOR
 
\end{algorithmic}
\end{algorithm}

For the setting of budget-constrained clustering, the two most relevant
algorithms we are aware of is the algorithm of \cite{SCoaB} (hereby denoted
as \texttt{S\&T}), and the \texttt{IU\_RED} algorithm of \cite{MJ24}. These
algorithms are somewhat similar to our approach, in that they interleave
uniform sampling and a non-uniform sampling scheme. However, the sampling
scheme is very different than ours, and focuses on finding the edge to which
the derivative of the 2nd Laplacian eigenvector is most sensitive. This has
two drawbacks. First, it is specifically designed for spectral clustering and
the case of $k=2$ clusters, which is based on the 2nd Laplacian eigenvector.
Extending this to more than $2$ clusters requires either recursive
partitioning (which can be suboptimal), or considering sensitivity w.r.t.
$k-1$ eigenvectors, and it is not clear what is the best way to do so.
Second, computing eigenvector derivatives requires a full spectral
decomposition at each iteration, which can be quite costly or impractical for
large matrices. In contrast, our algorithm does not compute derivatives.
Therefore, when used  with spectral clustering methods, which require only  the
smallest $2k$ eigenvectors, we have a significant gain.
    
It is possible to
speed up implementation
even further, in the context of spectral clustering. Since only a single edge is added per iteration,
one can use the previously computed eigenvectors as an initial value for fast
iterative eigenvector solvers (although restarting every couple of steps is
advised). Another possible option is to pick several edges from this
distribution at each step, which makes this process parallelizable.

\section{Experiments}

We tested our \texttt{CLUS2K} algorithm on several datasets, and compared it
to the \texttt{S\&T} and \texttt{IU\_RED} discussed earlier (other
alternatives were tested in \cite{SCoaB} and shown to be inferior). It is
important to note that \texttt{S\&T} and \texttt{IU\_RED} were designed
specifically for $k=2$ and spectral clustering using the unnormalized Laplacian 
$L_G$, while we also tested for various values of $k$, and using the
normalized Laplacian $\mathcal{L}_G$  as well \cite{normCut}. The \texttt{IU\_RED} performed badly (perhaps
because it relies substantially on the $k=2$ assumption) in these cases while
\texttt{S\&T} performed surprisingly well (yet still inferior to
\texttt{CLUS2K} ). Clustering was measured by cluster purity. The purity of a
single cluster is the percent of the most frequent class in the cluster. The
purity of a clustering is a weighted average of its single cluster purity,
weighted by the number of elements in each cluster. The purity shown is averaged over 5 runs.
\subsection{Synthetic Data}\label{synthetic}
\begin{figure}[!th]
\centering
\includegraphics[width=0.8\textwidth]{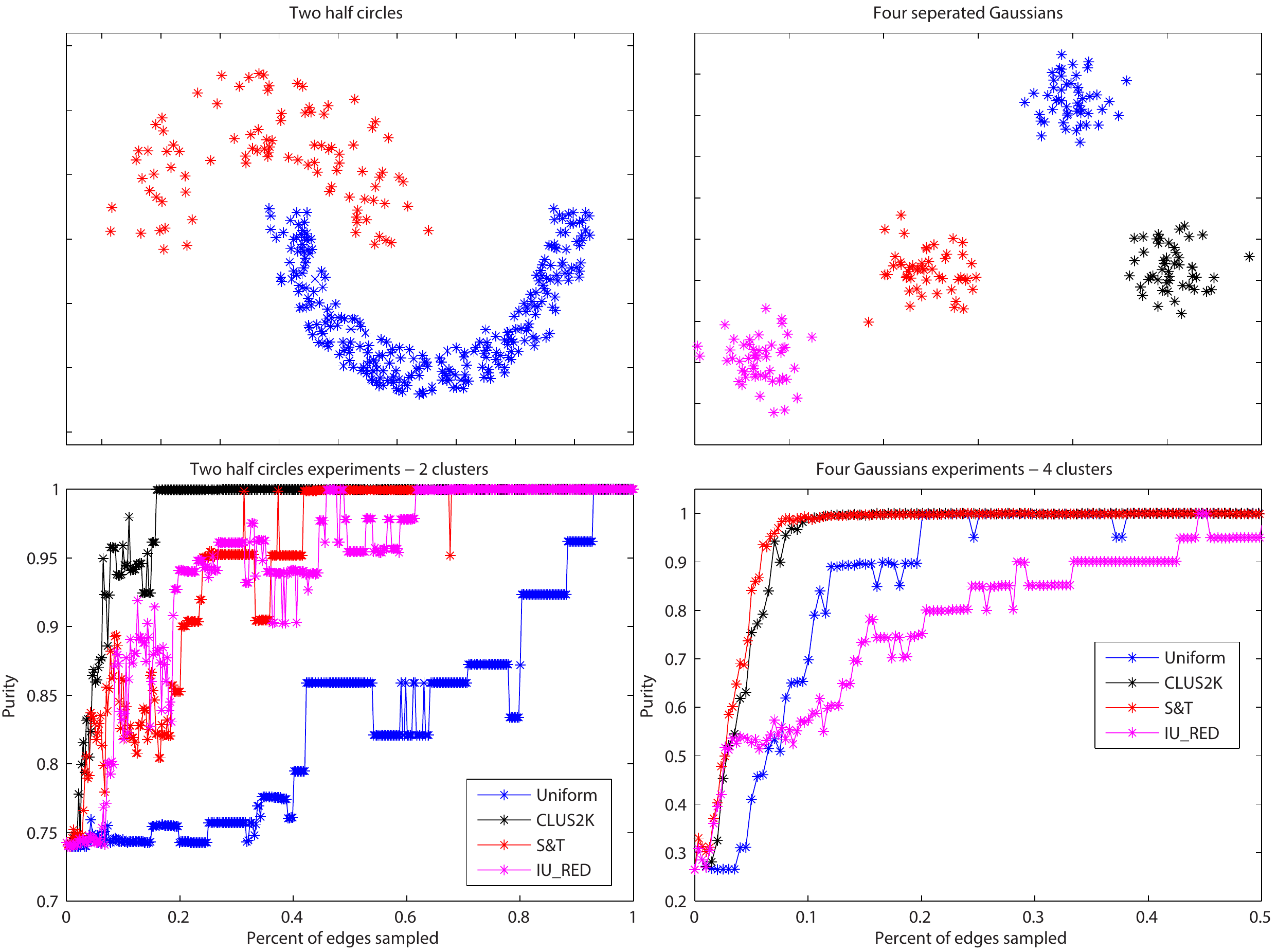}
\caption{Synthetic datasets results}
\end{figure}

The synthetic experiments were performed on two datasets - The two half
circles dataset, and a dataset comprising of four well separated Gaussians,
both experiments used unnormalized spectral clustering (see figure
\ref{synthetic}) using a gaussian weight matrix. The two
half circles is a classic clustering dataset with $k=2$ clusters. The Gaussian
dataset shows how the various algorithms handle an easy $k>2$ dataset, and
\texttt{IU\_RED} performs worse than uniform sampling in this case.

\subsection{Real Data}
We tested on three further datasets - the iris and glass \href{http://archive.ics.uci.edu/ml/\#UCI}{UCI\ datasets}   (both with k\textgreater2 clusters)\  using a gaussian weight matrix, and the Caltech-7 dataset, a subset of the Caltech-101 images datasets with 7 clusters gathered by \cite{caltech7}, using the similarity matrix suggested by \cite{alon}.
We tested each dataset using both the normalized and unnormalized Laplacian for clustering. The results are presented in figure \ref{uci}
\begin{figure}[!th]\label{uci}
\centering
\includegraphics[width=0.8\textwidth]{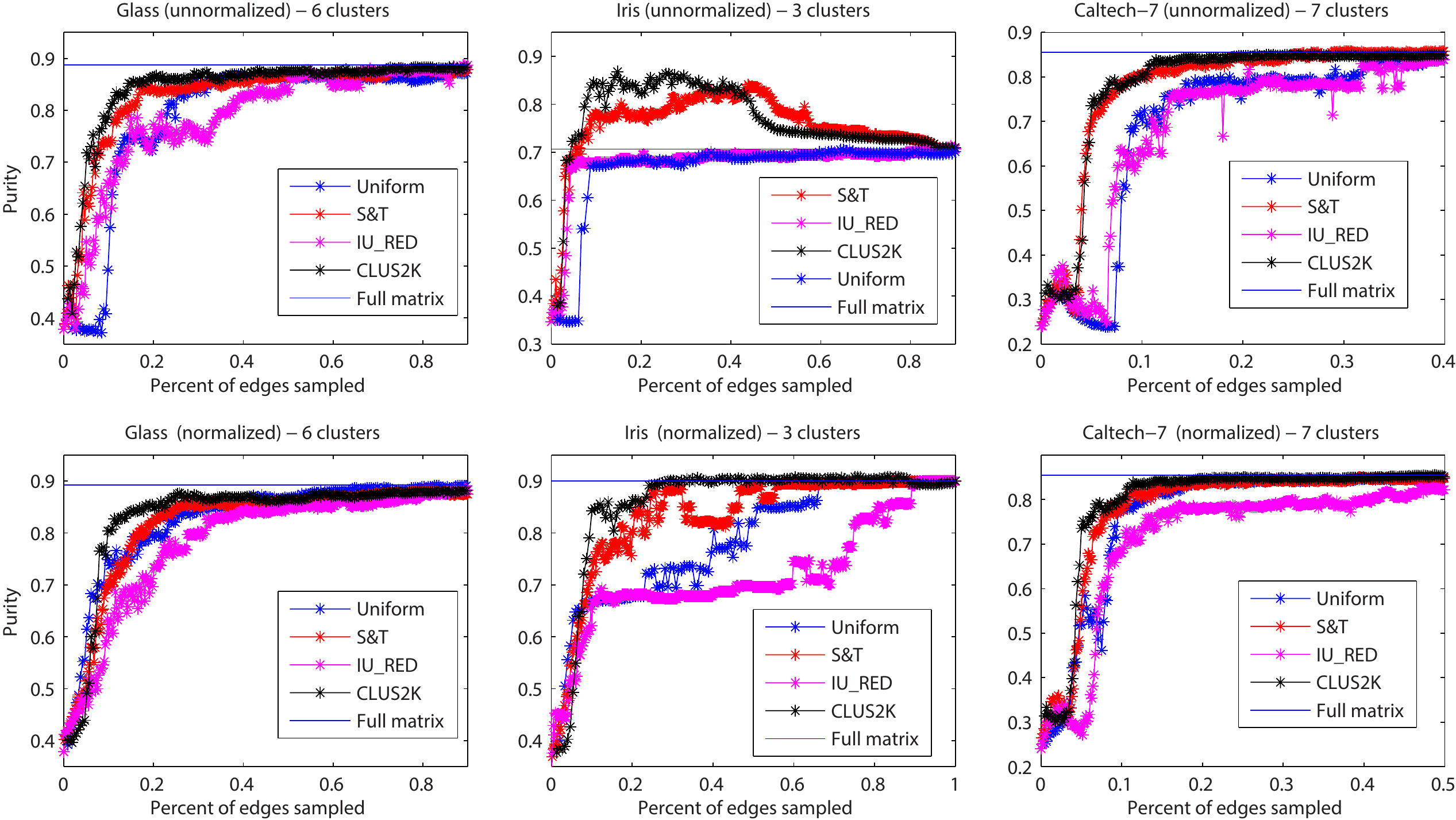}
\caption{UCI datasets results}
\end{figure}\\

Overall, the experiments show that the \texttt{CLUS2K} algorithm performs
as good or better than previous algorithms for budget-constrained clustering,
while being significantly computationally cheaper as well as more general.

\clearpage
%\bibliographystyle{plain}
%\begin{thebibliography}{

%\end{thebibliography}
\clearpage

\appendix
\centerline{\bf{\Large{Supplementary Material}}}
\section{Proof of Theorem \ref{spectralBound}}
We will need to Hoeffding-Chernoff bound for negative dependence:
\begin{theorem} \label{hoeffding}
Assume $X_i\in [0,1]$ are negatively dependent variables and define $X =\sum\limits_{i=1}\limits^n X_i$ then
\begin{equation*}
P\left( |X-\mathds{E}[X]|\geq\epsilon\mathds{E}[X]\right) \leq 2\exp\left(-\frac{\epsilon^2\mathds{E}[X]}{3}\right )
\end{equation*}
\end{theorem}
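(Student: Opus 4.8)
The plan is to follow the classical proof of the multiplicative Chernoff bound via the moment generating function, and to isolate the single place where independence is used — the factorization of the MGF of a sum into a product of individual MGFs — replacing that equality with an inequality valid under negative dependence. Everything else in the argument is insensitive to the dependence structure.

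The first and main step is a factorization lemma: if $X_1,\dots,X_n$ are negatively dependent and $g_1,\dots,g_n$ are nonnegative functions that are either all nondecreasing or all nonincreasing, then $\mathds{E}\left[\prod_{i} g_i(X_i)\right]\leq\prod_{i}\mathds{E}[g_i(X_i)]$. I would prove this by induction on $n$. Writing the product as $g_1(X_1)\cdot\prod_{i\geq 2} g_i(X_i)$, the first factor is a monotone function of $X_1$ and the tail product is monotone (in the same direction, because the $g_i$ are nonnegative) in $(X_2,\dots,X_n)$; applying the negative-dependence definition with $I=\{1\}$ and $J=\{2,\dots,n\}$ bounds the expectation of the product by the product of the two expectations, after which the inductive hypothesis closes the step. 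For the nonincreasing branch, the definition as stated (which quantifies only over nondecreasing functions) does not apply directly to $g_1$ and $\prod_{i\geq 2}g_i$; I would instead apply it to $-g_1$ and $-\prod_{i\geq 2}g_i$, which are nondecreasing, and observe that $\mathds{E}[(-g_1)(-G)]\leq\mathds{E}[-g_1]\mathds{E}[-G]$ is exactly $\mathds{E}[g_1 G]\leq\mathds{E}[g_1]\mathds{E}[G]$.

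With the lemma in hand the tails are the textbook computation. For the upper tail and $t>0$, Markov's inequality gives $P\big(X\geq(1+\epsilon)\mu\big)\leq e^{-t(1+\epsilon)\mu}\,\mathds{E}[e^{tX}]$ with $\mu=\mathds{E}[X]$; applying the lemma to $g_i(x)=e^{tx}$ (nonnegative, nondecreasing) yields $\mathds{E}[e^{tX}]\leq\prod_i\mathds{E}[e^{tX_i}]$, and since $X_i\in[0,1]$ the convexity bound $e^{tx}\leq 1+(e^t-1)x$ together with $1+y\leq e^y$ gives $\mathds{E}[e^{tX_i}]\leq\exp\big((e^t-1)\mathds{E}[X_i]\big)$, hence $\mathds{E}[e^{tX}]\leq\exp\big((e^t-1)\mu\big)$. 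Choosing $t=\ln(1+\epsilon)$ produces the standard $\big(e^\epsilon(1+\epsilon)^{-(1+\epsilon)}\big)^\mu\leq\exp(-\epsilon^2\mu/3)$ for $\epsilon\in(0,1]$. The lower tail runs identically on $e^{-tX}$ with $t>0$: here $g_i(x)=e^{-tx}$ is nonincreasing, so the nonincreasing branch of the lemma is precisely what is required, and the analogous optimization gives a bound of $\exp(-\epsilon^2\mu/2)$. A union bound over the two tails, using $-\epsilon^2/2\leq-\epsilon^2/3$, yields the claimed $2\exp(-\epsilon^2\mathds{E}[X]/3)$.

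The only nonroutine point is the lemma, and within it the nonincreasing case needed for the lower tail, since the stated definition of negative dependence covers only nondecreasing $f$ and $g$. The covariance-sign observation above resolves this cleanly, but it has to be stated carefully — a direct substitution of $e^{-tX_i}$ into the definition does not work — and it is the crux of adapting Spielman--Teng-style independent sampling arguments to sampling without replacement. All steps downstream of the lemma are the classical Chernoff estimate and introduce no new ideas.
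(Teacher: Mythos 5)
Your proposal is correct and is essentially the argument the paper invokes: the paper gives no in-text proof but defers to Dubhashi--Panconesi (Sections 1.6 and 3.1), whose treatment is exactly your route --- isolate the MGF factorization as the sole use of independence, replace it with the inequality $\mathds{E}\left[\prod_i e^{tX_i}\right]\leq\prod_i\mathds{E}\left[e^{tX_i}\right]$ valid under negative dependence (including the sign-flip trick for the nonincreasing functions $e^{-tX_i}$ in the lower tail), and run the classical Chernoff computation. Your parenthetical restriction to $\epsilon\in(0,1]$ for the $\exp(-\epsilon^2\mu/3)$ form is the same implicit restriction under which the paper's statement and its applications operate, so there is no discrepancy to repair.
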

See \cite{concentration} 1.6 and 3.1 for details.\\

First it is important to note a change in notation from \cite{speilman} in order to be consistent with notation used in \cite{KarSpar}. A $\epsilon$-spectral approximation in our paper is weaker then  a $(1+\epsilon)$-spectral approximation in \cite{speilman}.\\

We will now go over the main changes needed to prove Theorem 6.1 in \cite{speilman} (disregarding S.2) with negatively dependent sampling of edges and weights $w_{ij}\in [0,1]$.  \\

The proof of Claim 6.5 is quite straightforward. The claim of Lemma 6.6 
needs to be changed to $\mathds{E}[\Delta_{r,t}^k\Delta_{t,r}^l]\leq\frac{w_{r,t}}{\gamma^{k+l-1}d_r}$ instead of  $\mathds{E}[\Delta_{r,t}^k\Delta_{t,r}^l]\leq\frac{1}{\gamma^{k+l-1}d_r}$. The changes to the proof are again straightforward (remembering $w_{ij}\in [0,1]$).\\

The main change is to Lemma 6.4. Using the modified Lemma 6.6 and substituting negative dependents for independence one can prove
\begin{equation*}
\sum_{\sigma\,valid\,for\,T,\tau}\prod_{s\in T}\left[\Delta_{v_{s-1},v_s}\prod_{i:\tau(i)=s}\Delta_{v_{i-1},v_i}\right]\leq\frac{1}{\gamma^{k-|T|}}\sum_{\sigma\,valid\,for\,T,\tau}\prod_{s\in T}\frac{w_{v_{s-1},v_s}}{d_{v_{s-1}}}.
\end{equation*} 
instead of equation 10 in the paper. The last change is to pick $\sigma(s)$ proportional to $w_{v_{s-1},v_s}$ instead of uniformly to prove that 

\begin{equation*}
\sum_{\sigma\,valid\,for\,T,\tau}\prod_{s\in T}\frac{w_{v_{s-1},v_s}}{d_{v_{s-1}}}\leq1
\end{equation*}
 instead of equation 11. From there on all changes are straightforward.
\section{Proof of Theorem \ref{specClusterable}\\ }
Let $\tilde{L}=\tilde{L}^{in}+\tilde{L}^{out}$. Let $P$ be the zeros eigenspace of $\tilde{L}^{in}$, which is the same as the zero eigenspace of $L^{in}$, if all the $\tilde{W}^i$ are connected. Let $Q$ be the space spanned by the first $k$ eigenvectors of $\tilde{L}$. According to the Sin-Theta theorem \cite{sin_theta}, $||\sin(\Theta(P,Q))||\leq\frac{||\tilde{L}^{out}||}{\mu^{in}_2}$ where $||\cdot ||$ is the spectral norm of $\tilde{L}^{out}$ and $\mu_2^{in}$ is the second smallest unnormalized eigenvalue of $\tilde{L}^{in}$. To prove the theorem we will show that $\mu_2^{in}= \Omega(n^\alpha)$ and that $||\tilde{L}^{out}|| = \mathcal{O}(n^\beta+n^\gamma)$.\\

The first claim is through using the first two assumptions and the following  lemma

\begin{lemma}\label{nor_unnor}
Let $\lambda_2$ and $\mu_2$ be the second smallest normalized and unnormalized eigenvalues of $L$, and $d = \min_iD_{ii}$ then $\mu_2\geq \lambda_2\cdot d$.
\end{lemma}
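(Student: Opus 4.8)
The plan is to recognize $\lambda_2$ as the second-smallest eigenvalue of the generalized eigenvalue problem $Lx=\lambda Dx$ and then compare the two eigenvalues through the Courant--Fischer min--max principle, where the decisive fact is the pointwise bound $x^TDx\ge d\,x^Tx$. First I would dispose of the degenerate cases: if $d=0$ (an isolated vertex) the claimed inequality reads $\mu_2\ge 0$, which holds since $L$ is positive semidefinite; and if the graph is disconnected then $\mu_2=\lambda_2=0$ and there is nothing to prove. So I may assume $D\succ 0$, which makes $D^{\pm 1/2}$ well defined.

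Next I would set up the two variational characterizations on the same footing. Writing $v=D^{1/2}x$, the identities $v^T\mathcal{L}v=x^TLx$ and $v^Tv=x^TDx$ show that $\mathcal{L}v=\lambda v$ is equivalent to $Lx=\lambda Dx$, so the eigenvalues of $\mathcal{L}$ are exactly the generalized eigenvalues of the pencil $(L,D)$. Courant--Fischer then gives, for every $k$,
\[
\mu_k=\min_{\dim U=k}\ \max_{0\neq x\in U}\frac{x^TLx}{x^Tx},\qquad
\lambda_k=\min_{\dim U=k}\ \max_{0\neq x\in U}\frac{x^TLx}{x^TDx},
\]
both minima ranging over all $k$-dimensional subspaces $U\subseteq\mathds{R}^n$ (for $\lambda_k$ this follows by applying ordinary Courant--Fischer to $\mathcal{L}$ and substituting $U=D^{-1/2}V$). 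Phrasing both eigenvalues as a min--max over the \emph{same} family of subspaces is the crux of the argument, since it removes the apparent mismatch between the constraints $x\perp\mathbf{1}$ and $x\perp D\mathbf{1}$ that arises in the more naive ``second eigenvector'' formulation.

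Finally I would push the pointwise comparison through. Since every diagonal entry satisfies $D_{ii}\ge d$ we have $x^TDx\ge d\,x^Tx$ for all $x$, and since $L\succeq 0$ the numerator $x^TLx$ is nonnegative; hence for every $x\neq 0$,
\[
\frac{x^TLx}{x^TDx}\ \le\ \frac{1}{d}\cdot\frac{x^TLx}{x^Tx}.
\]
Taking $\max$ over $x\in U$ and then $\min$ over $U$ preserves the inequality, so $\lambda_k\le \mu_k/d$ for every $k$, and in particular $\mu_2\ge d\,\lambda_2$. The only real subtlety is the one already flagged---keeping the two variational problems over a common domain of subspaces---after which the estimate is immediate; no concentration or graph-structure input is needed, only positive semidefiniteness of $L$ and the uniform lower bound on the degrees.
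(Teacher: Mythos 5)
Your proposal is correct and follows essentially the same route as the paper: both cast $\mu_2$ and $\lambda_2$ via the min--max theorem as minimizations over the same family of two-dimensional subspaces and conclude from the pointwise bound $x^TDx\geq d\,\|x\|^2$. Your write-up is slightly more careful than the paper's (disposing of the case $d=0$, justifying the generalized-eigenvalue characterization of $\lambda_2$ via the substitution $v=D^{1/2}x$, and noting that $x^TLx\geq 0$ is needed to divide the inequality through), but the core argument is identical.
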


\begin{prf}
From the min-max theorem we have 
\begin{equation*}
\mu_2=\min_{U:\,\dim(U)=2}\left\{\max_{x\in U\backslash\{0\}}\frac{x^TLx}{||x||^2}\right\}
\end{equation*}
\begin{equation*}
\lambda_2=\min_{U:\,\dim(U)=2}\left\{\max_{x\in U\backslash\{0\}}\frac{x^TLx}{x^TDx}\right\}
\end{equation*}
and the lemma follows from the fact that $x^TDx\geq d||x||^2$.\qed
\end{prf}

Let $m_1$ be the number of edges needed in order to have an $\epsilon$-spectral approximation of each inner-cluster matrix $W^{i}$ for $\epsilon=3/4$ with probability $\delta/2$, then by theorem \ref{spectralBound} we have that $m_1 = \tilde{\mathcal{O}}(n^{2-\alpha})$. Using this fact, the first two assumptions and lemma \ref{nor_unnor}, it is easy to see that $\mu_2^{in}=\Omega(n^\alpha) $.\\

We now need to show that $||\tilde{L}^{out}|| = \mathcal{O}(n^\beta+n^\gamma)$. The main tool would be the matrix Chernoff inequlity for sampling matrixes without replacements.

\begin{theorem}
Consider a finite sequence of Hermitian matrices ${\bf{X_1}},...,{\bf{X_k}}$ sampled uniformly without replacements from a finite set of matrices of dimension n. Assume that
\begin{equation*}
{\bf{X_k\succeq 0}}  \,\,\,\,\,\,\,\,\,\,\,||{\bf{X_i}||\leq R}.
\end{equation*}  
Define $\bf{Y=\sum\limits_{i=1}\limits^k X_i}$ then 
\begin{equation*}
P(||{\bf{Y}||\geq(1+\epsilon)||\mathds{E}[Y]}||)\leq n\cdot\left(\frac{e^\epsilon}{(1+\epsilon)^{1+\epsilon}}\right)^{||\mathds{E}[Y]||/R}
\end{equation*}
\end{theorem}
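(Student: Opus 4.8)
The plan is to obtain this bound by transferring the classical matrix Chernoff inequality for \emph{independent} (equivalently, with-replacement) sampling to the without-replacement setting. First I would note that when $X_1,\dots,X_k$ are drawn i.i.d.\ uniformly from the collection (sampling \emph{with} replacement), the stated inequality is exactly the matrix Chernoff bound of \cite{tropp}: since each $X_i\succeq0$ we have $Y\succeq0$ and hence $\|Y\|=\lambda_{\max}(Y)$, and Tropp's bound asserts precisely that $P(\lambda_{\max}(Y)\ge(1+\epsilon)\mu_{\max})\le n\bigl(e^\epsilon/(1+\epsilon)^{1+\epsilon}\bigr)^{\mu_{\max}/R}$ with $\mu_{\max}=\|\mathds{E}[Y]\|$. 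Moreover the first moment is unchanged between the two schemes, $\mathds{E}_{\text{wor}}[Y]=\mathds{E}_{\text{wr}}[Y]$, because each matrix is equally likely to occupy any given position under either rule, so $\mu_{\max}$ is literally the same quantity in both cases. The whole task therefore reduces to showing that passing from with-replacement to without-replacement sampling cannot increase the relevant tail.

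The bridge is the matrix Laplace-transform method underlying \cite{tropp}. For any $\theta>0$, Markov's inequality applied to $\mathrm{tr}\exp(\theta Y)$ gives
\begin{equation*}
P\bigl(\lambda_{\max}(Y)\ge t\bigr)\le e^{-\theta t}\,\mathds{E}\bigl[\mathrm{tr}\exp(\theta Y)\bigr],
\end{equation*}
so the entire argument is driven by the matrix moment generating function $\mathds{E}[\mathrm{tr}\exp(\theta Y)]$; every subsequent step (the subadditivity of the matrix cumulant via Lieb's theorem, the PSD and operator-norm inputs, and the optimization over $\theta$) manipulates only this scalar quantity and is indifferent to how the $X_i$ were generated. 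Hence it suffices to bound the matrix MGF under without-replacement sampling by the matrix MGF under with-replacement sampling. This is exactly the role of \cite{sampWithout}: the map $A\mapsto\mathrm{tr}\exp(\theta A)$ is convex on Hermitian matrices, and their domination result states that for every convex $g$, $\mathds{E}_{\text{wor}}\,g\bigl(\sum_i X_i\bigr)\le\mathds{E}_{\text{wr}}\,g\bigl(\sum_i Y_i\bigr)$. Applying this with $g=\mathrm{tr}\exp(\theta\,\cdot)$ transfers the with-replacement MGF bound verbatim, and substituting it back into the Laplace-transform chain yields the stated inequality.

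The main obstacle is the convex-domination lemma of \cite{sampWithout} itself (for us, invoking it correctly rather than reproving it). The intuition is the same negative-dependence intuition exploited earlier: sampling without replacement is \emph{more concentrated} than sampling with replacement, so convex functionals of the sum can only decrease in expectation. Formally this is established by an exchangeability/permutation-averaging argument — one realizes the without-replacement sum as a conditional average of with-replacement-type configurations and then applies Jensen's inequality to the convex $g$. The delicate point is that this must be carried out for matrix-valued sums and the matrix convex function $\mathrm{tr}\exp$, so one cannot simply recycle scalar negative-dependence estimates; indeed, as noted in the main text, negative dependence has no clean matrix analogue, and it is precisely this gap that \cite{sampWithout} fills. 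Everything remaining is routine transcription of the independent-case argument of \cite{tropp}.
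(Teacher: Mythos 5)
Your proposal is correct and takes essentially the same route as the paper: both reduce the claim to Tropp's with-replacement matrix Chernoff bound and bridge the two sampling schemes via the Gross--Nesme domination $\mathds{E}_{\mathrm{wor}}\left[\mathrm{Tr}\exp\left(\theta\sum_i {\bf X_i}\right)\right]\leq\mathds{E}_{\mathrm{wr}}\left[\mathrm{Tr}\exp\left(\theta\sum_i {\bf Y_i}\right)\right]$, the paper splicing this in at the subadditivity-of-cumulants step (Lemma 3.5.1 of \cite{tropp}) while you transfer the whole matrix MGF at the Laplace-transform step --- a purely cosmetic difference. One minor note: \cite{sampWithout} establishes the domination for the trace exponential specifically (via Hoeffding's conditional-mixture argument and convexity of $A\mapsto\mathrm{tr}\exp(A)$) rather than as a blanket statement for all convex $g$, but since $\mathrm{tr}\exp$ is the only functional your argument uses, this does not affect correctness.
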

\begin{prf}
This is an adaptation of theorem 5.1.1 from $\cite{tropp}$ replacing the independence requirement to sampling without replacements. In order to adapt the proof we notice that the only place where independence is used in in lemma 3.5.1 (subadditivity of the matrix cumulant generating functions) where we need to prove that 
\begin{equation}\label{cgfs}
\forall \theta\in\mathds{R}\,\,\,\,\,\,\,\,\,\,\,\,\mathds{E}\left[Tr\left(\exp\left(\sum\theta{\bf X_i}\right)\right)\right]\leq Tr\left(\exp\left(\sum\log\mathds{E}e^{\theta\bf X_i}\right)\right)
\end{equation}
Using the result of $\cite{sampWithout}$, if $\bf X_i$ are sampled uniformly at random without replacements for a finite set, and $\bf Y_i$ are sampled with the same probability with replacements then 
\begin{equation}
\mathds{E}\left[Tr\left(\exp\left(\sum\theta{\bf X_i}\right)\right)\right]\leq \mathds{E}\left[Tr\left(\exp\left(\sum\theta{\bf Y_i}\right)\right)\right]
\end{equation}
so we can conclude that 
\begin{equation*}
\mathds{E}\left[Tr\left(\exp\left(\sum\theta{\bf X_i}\right)\right)\right]\leq\mathds{E}\left[Tr\left(\exp\left(\sum\theta{\bf Y_i}\right)\right)\right]\leq 
\end{equation*}
\begin{equation*}
\leq Tr\left(\exp\left(\sum\log\mathds{E}e^{\theta\bf Y_i}\right)\right) = Tr\left(\exp\left(\sum\log\mathds{E}e^{\theta\bf X_i}\right)\right).
\end{equation*}
where the second inequality is from \ref{cgfs} as $\bf{Y_i}$ are independent.
\end{prf}

We define for each edge $e$ connecting nodes in different clusters the matrix $\bf X_e$ that is equal to zero with probability $1-p$ and is equal to $\frac{1}{p}L_e$ with probability $p$, where $L_e$ is the Laplacian of a single edge graph with weight $w_e$. Then $\tilde{L}^{out}=\sum\limits_{e\in S_{out}}\bf X_e$, $\,\,\mathbb{E}[\tilde{L}^{out}]=L^{out}$, $\,\,\bf X_e\succeq0$ and $||{\bf X_e||\leq}1/p$.

If we use the matrix Chernoff inequality with $1+\epsilon=2e\cdot n^{\gamma-\beta}$  then 
\begin{equation*}
P(||\tilde{L}^{out}||\geq2en^\gamma)\leq n\left(\frac{1}{2n^{\gamma-\beta}}\right)^{2en^\gamma p}
\end{equation*}
So if $p=m/\binom{n}{2}=\mathcal{O}\left(\frac{log(n)}{n^\gamma}\right)$ we get that $P(||\tilde{L}^{out}||\geq2en^\gamma)
<\delta/2$ for large enough $n$.
\section{Cut Approximation}

 We will start by proving an analog of theorem \ref{spectralBound} in the paper. We will use the following lemma from \cite{KarLemma}:

\begin{lemma} \label{combLemma}
Let $G$ be an undirected graph with $n$ vertices and minimal cut $c>0$. For all $\alpha\geq1$ the number of cuts with weight smaller of equal to $\alpha c$ is less then $n^{2\alpha}$.
\end{lemma}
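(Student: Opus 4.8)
The plan is to prove this purely combinatorial statement via Karger's randomized edge-contraction procedure, which turns the counting bound into a probability-of-success calculation. The procedure repeatedly selects an edge at random with probability proportional to its weight and contracts it, merging its two endpoints into a single supernode (keeping parallel edges, discarding self-loops), until a prescribed number of supernodes remain. The central quantity I would track is, for a fixed target cut $C$ with $w(C)\le\alpha c$, the probability that $C$ \emph{survives} the contractions, meaning no edge crossing $C$ is ever contracted, so that $C$ is still realizable as a cut of the reduced multigraph.

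First I would establish the key weight bound: at any stage with $i$ supernodes remaining, the total remaining edge weight is at least $ic/2$. This holds because each supernode, via the set of original vertices merged into it, defines a genuine cut of $G$ whose weight equals that supernode's weighted degree in the reduced graph; since $c$ is the minimum cut, every such degree is at least $c$, and summing the $i$ degrees double-counts each remaining edge. I would also note that contraction never destroys this lower bound, since every cut of the reduced multigraph pulls back to a cut of $G$ of identical weight, so the reduced minimum cut is still at least $c$.

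Next, conditioning on $C$ having survived so far, at a stage with $i$ supernodes the probability that the next contracted edge crosses $C$ is at most $w(C)/(\text{total weight})\le \alpha c/(ic/2)=2\alpha/i$, so $C$ survives this step with probability at least $(i-2\alpha)/i$. Contracting down to $r=\lceil 2\alpha\rceil$ supernodes, the survival probability telescopes to at least $\prod_{i=r+1}^{n}(i-2\alpha)/i$, which for integer $2\alpha$ (taking $r=2\alpha$) equals $\binom{n}{2\alpha}^{-1}$. Finally, among the $r$ remaining supernodes $C$ is one specific two-partition, so drawing a uniformly random nonempty proper two-partition recovers $C$ with probability at least $2^{-(r-1)}$. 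Since a single run outputs at most one cut and the events that the output equals a given cut are disjoint across distinct cuts, $\sum_C \Pr[\text{output}=C]\le 1$; dividing $1$ by the per-cut lower bound $\binom{n}{2\alpha}^{-1}2^{-(2\alpha-1)}$ bounds the number of cuts of weight $\le\alpha c$ by $\binom{n}{2\alpha}2^{2\alpha-1}$, and the elementary inequalities $\binom{n}{2\alpha}\le n^{2\alpha}/(2\alpha)!$ and $2^{2\alpha-1}\le(2\alpha)!$ collapse this to $n^{2\alpha}$.

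I expect the main obstacle to be the non-integer case: when $2\alpha$ is not an integer the clean telescoping and the closed form $\binom{n}{2\alpha}^{-1}$ no longer apply directly, so I would stop contraction at $r=\lceil 2\alpha\rceil$, bound the partial product $\prod_{i=r+1}^{n}(i-2\alpha)/i$ by hand, and absorb both the fractional discrepancy and the $2^{r-1}$ partition factor into the final bound. Verifying that this combination still stays below $n^{2\alpha}$ for all real $\alpha\ge 1$ is the one genuinely fiddly computation; everything else reduces to the two robust structural facts above, namely the $ic/2$ weight bound and the per-step survival estimate.
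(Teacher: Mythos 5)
Your proposal is correct, but it takes a genuinely different route from the paper: the paper does not actually prove Lemma \ref{combLemma} at all --- it cites Karger's cut-counting result \cite{KarLemma}, which is stated there for integer weights, and dismisses general positive weights with a one-line ``scaling and rounding'' remark. What you have done is reprove the cited result itself via the contraction algorithm, and your weighted variant (contract an edge chosen with probability proportional to its weight) handles arbitrary real weights natively, so you never need the scaling/rounding reduction --- which, for irrational weights, in fact hides a small limiting argument the paper glosses over. All your structural steps are sound: the $ic/2$ total-weight bound at $i$ supernodes (each supernode's weighted degree is the weight of a genuine cut of $G$, hence at least $c$), the conditional per-step survival bound $1-2\alpha/i$ (every factor positive since you stop at $r=\lceil 2\alpha\rceil$), the $2^{-(r-1)}$ random-bipartition factor, and the disjointness argument giving $N\le 1/q$. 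The one step you defer, the non-integer case, does close, and here is the completion: with $\beta=2\alpha$ non-integer we have $\beta>2$, so $r=\lceil\beta\rceil\ge 3$, and
\begin{equation*}
2^{r-1}\prod_{i=r+1}^{n}\frac{i}{i-\beta}
=\frac{2^{r-1}\,\Gamma(r+1-\beta)}{r!}\cdot\frac{\Gamma(n+1)}{\Gamma(n+1-\beta)}
\le \frac{2^{r-1}}{r!}\,n^{\beta}<n^{2\alpha},
\end{equation*}
using that $r+1-\beta\in(1,2)$ where $\Gamma(t)\le 1$, that $2^{r-1}<r!$ for $r\ge 3$, and that $\Gamma(n+1)/\Gamma(n+1-\beta)\le n^{\beta}$, which follows by peeling off $\lfloor\beta\rfloor$ integer factors (each at most $n$) and bounding the fractional step via log-convexity of $\Gamma$, namely $\Gamma(x+s)\le\Gamma(x)\,x^{s}$ for $s\in[0,1]$. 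One edge case deserves a sentence: if $n\le r$, skip contraction entirely; the total number of cuts is then at most $2^{n-1}-1<n^{2\alpha}$ since $2\alpha\ge n-1$. With these two additions your argument is a complete, self-contained proof, arguably preferable to the paper's delegation since the lemma is invoked there for real-weighted graphs with arbitrary $\alpha\ge 1$.
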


The lemma is proven in \cite{KarLemma} for graphs with integer weights, but the extension to any positive weights is trivial by scaling and rounding. We can now state and prove the theorem guaranteeing good cut approximations.

\begin{theorem}\label{origKar}
Let $G$ be a graph with weights $w_{ij}\in[0,1]$, with minimal cut $c>0$, and $\tilde{G}$ its approximation after sampling $m$ edges uniformly.  If $m\geq\binom{n}{2}\frac{3(2\ln(n)+\ln(\frac{1}{\delta})+k)}{\epsilon^2c}$ where $k= \ln(2+4\ln(n))$, then the probability that $\tilde{G}$ is not an $\epsilon$-cut approximation is smaller then $\delta$.
\end{theorem}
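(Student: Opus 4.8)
The plan is to prove Theorem \ref{origKar} by a union bound over cuts, using the combinatorial Lemma \ref{combLemma} to control the number of cuts at each scale and the negative-dependence Chernoff bound (Theorem \ref{hoeffding}) to control each individual cut. First I would fix a subset $S\subset V$ defining a cut of weight $|\partial_G S|$. Because the weights lie in $[0,1]$ and edges are sampled without replacement, the approximate cut weight $|\partial_{\tilde{G}}S|$ is a sum of negatively dependent terms (each sampled edge on the cut contributes $w_{ij}/p$), with $\mathbb{E}[|\partial_{\tilde{G}}S|]=|\partial_G S|$. Applying Theorem \ref{hoeffding} gives
\begin{equation*}
P\left(\bigl||\partial_{\tilde{G}}S|-|\partial_G S|\bigr|\geq\epsilon|\partial_G S|\right)\leq 2\exp\left(-\frac{\epsilon^2|\partial_G S|}{3}\right),
\end{equation*}
so the failure probability for a single cut decays exponentially in its true weight. (A small point to check: the cleanest form of the bound scales the variables into $[0,1]$, which is why the $p$ and the per-edge weights combine so that the exponent depends on $|\partial_G S|$ rather than on the rescaled sum; I would verify this scaling carefully.)

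The key idea, following Karger's sparsification argument, is that heavier cuts fail with much smaller probability, and Lemma \ref{combLemma} tells us there are correspondingly few light cuts. Concretely, I would bucket the cuts by weight: for integer $\ell\geq1$, the cuts of weight in $[\ell c,(\ell+1)c)$ number at most $n^{2(\ell+1)}$, while each such cut fails with probability at most $2\exp(-\epsilon^2\ell c/3)$. The union bound over all cuts then becomes a sum over these buckets,
\begin{equation*}
\sum_{\ell\geq1}n^{2(\ell+1)}\cdot 2\exp\left(-\frac{\epsilon^2\ell c}{3}\right),
\end{equation*}
and the point of the hypothesis $m\geq\binom{n}{2}\frac{3(2\ln n+\ln(1/\delta)+k)}{\epsilon^2 c}$ is precisely to make $\frac{\epsilon^2 c}{3}$ (after accounting for the sampling probability $p=m/\binom{n}{2}$) exceed $2\ln n$ by a margin, so that the geometric-type series converges and is dominated by its first term. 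The extra additive term $k=\ln(2+4\ln n)$ absorbs the constant factor and the series-summation slack, leaving total failure probability at most $\delta$.

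The main obstacle will be making the bucketing-and-summation step rigorous while keeping the final threshold exactly in the stated form. One has to be attentive that the exponent in Theorem \ref{hoeffding} involves the \emph{rescaled} expectation $\mathbb{E}[|\partial_{\tilde{G}}S|]=|\partial_G S|$ and that the sampling probability enters correctly, so the margin $\frac{\epsilon^2 c}{3}\cdot(\text{effective factor})$ matches the $2\ln n$ coming from Lemma \ref{combLemma}; a misplaced factor here either breaks the union bound or loosens the constant. I would therefore (i) verify the single-cut tail bound with the correct dependence on $p$ and $c$, (ii) choose the threshold on $m$ so the $\ell=1$ term alone is at most $\delta/2$ and the geometric tail contributes at most $\delta/2$, and (iii) confirm that the particular choice $k=\ln(2+4\ln n)$ is exactly what is needed to bound the full series rather than just its leading term. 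The remaining steps—scaling to non-integer weights via Lemma \ref{combLemma}'s trivial extension, and assembling the union bound—are routine.
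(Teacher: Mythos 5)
Your overall route is the same as the paper's (per-cut Chernoff bound via negative dependence, Karger's cut-counting lemma, union bound stratified by cut weight), but the specific bucketing step fails, and it fails in a way that makes the stated threshold unreachable. In bucket $\ell$ you pair the count $n^{2(\ell+1)}$, which Lemma \ref{combLemma} gives at the bucket's \emph{top} weight $(\ell+1)c$, with the tail probability $2\exp(-a\ell)$ evaluated at the bucket's \emph{bottom} weight $\ell c$, where $a=\epsilon^2 p c/3$ and $p=m/\binom{n}{2}$. Under the stated hypothesis $a\geq 2\ln n+\ln(1/\delta)+k$, so already the $\ell=1$ term of your series is $2n^4e^{-a}\leq 2n^2\delta e^{-k}=2n^2\delta/(2+4\ln n)$, which exceeds $\delta$ by a factor of order $n^2/\ln n$: the uncancelled $n^2$ coming from the count/probability mismatch kills the bound, and no choice of $k=O(\ln\ln n)$ can absorb it. Your plan item (ii) (make the $\ell=1$ term at most $\delta/2$) would force $a\geq 4\ln n+\ln(4/\delta)$, i.e.\ a threshold on $m$ with $4\ln n$ in place of $2\ln n$; so integer bucketing proves only a constant-factor-weakened version of the theorem, not the statement with its exact constants.

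The paper closes exactly this gap with a summation-by-parts (worst-case measure) argument rather than bucketing: writing the union bound as $\sum_{\alpha\geq1}f(\alpha)P(\alpha)$ with $P(\alpha)=2e^{-(\ln(1/\delta)+k)\alpha}n^{-2\alpha}$ and cumulative count $F(\alpha)\leq n^{2\alpha}$, it argues (since $P$ is decreasing, by shifting mass of the counting measure upward) that the sum is maximized when the cumulative count equals $n^{2\alpha}$ exactly, yielding the boundary-plus-integral bound $n^2P(1)+\int_1^\infty P(\alpha)\frac{dn^{2\alpha}}{d\alpha}d\alpha=2\delta e^{-k}+\frac{4\delta\ln(n)e^{-k}}{\ln(1/\delta)+k}\leq\delta(2+4\ln n)e^{-k}=\delta$. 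Here the count density $2\ln(n)\,n^{2\alpha}$ cancels the $n^{-2\alpha}$ inside $P(\alpha)$ pointwise at the same $\alpha$, with no bucket mismatch, and this exact pairing is precisely what makes $k=\ln(2+4\ln n)$ sufficient --- answering your item (iii) in the negative for your scheme. (A discrete Abel-summation variant also works: sorting cuts by weight, the $j$-th cut has weight at least $c\ln j/(2\ln n)$, and $\sum_j j^{-a/(2\ln n)}$ reproduces the paper's two terms.) Separately, your displayed single-cut bound omits the sampling probability: it should read $2\exp\left(-\epsilon^2 p|\partial_G S|/3\right)$, since one must rescale by $p$ to get summands in $[0,1]$ before applying Theorem \ref{hoeffding}; you flag this and re-insert $p$ later, so it is cosmetic, but as displayed the exponent is wrong.
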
 

\begin{prf}
This is an adaptation of the proof in \cite{KarSpar} - consider a cut with weight $\alpha c$. Let $p=m/\binom{n}{2}$ the probability to sample a single edge. Let $Y_e = X_e \cdot w_e$ where $X_e$ is an indicator whether edge $e$ on the cut was sampled and $w_e$ its weight. Define $Y$ the sum of $Y_e$ on all the edges along the cut, then by the fact that edges are negatively dependent and theorem \ref{hoeffding}, the probability that the cut is not an $\epsilon$ approximation is smaller then
\begin{equation*}\begin{split}
2\exp\left(-\frac{\epsilon^2\mathds{E}[Y]}{3}\right)=2\exp\left (-\frac{\epsilon^2\alpha c p}{3}\right)\leq
\\ \leq 2\exp\left( -\left(\ln\left(\frac{1}{\delta}\right)+k\right)\alpha\right)\cdot n^{-2\alpha}
\end{split}\end{equation*}
Define $P(\alpha) =  2\exp\left( -\left(\ln\left(\frac{1}{\delta}\right)+k\right)\alpha\right)\cdot n^{-2\alpha}$ and let $f(\alpha)$ the number of cuts with value $\alpha c$ in the original graph. By the union bound the probability that some cut is not an $\epsilon$ approximation is less then $\sum\limits_{\alpha\geq 1}f(\alpha)P(\alpha)$ (notice that this sum is well defined since $f(\alpha)$ is non zero only in a finite number of $\alpha$ values). Defining $F(\alpha) = \sum\limits_{\alpha\geq x}f(x)$ then by the previous lemma $F(\alpha) \leq n^{2\alpha}$. Let $g$ be any measure on $[1,\infty)$ such that $G(\alpha)=\int\limits_1\limits^{\alpha}dg\leq n^{2\alpha}$, then the integral $\int\limits_1\limits^{\infty}P(x)dg$ is maximized when $G(\alpha)=n^{2\alpha}$. This is due to the fact that $P$ is a monotonically decreasing function, so if the inequality is not tight at some point $x_1$ we could increase the value by picking $\tilde{g}(x) = g(x) +\tilde{\epsilon} \delta(x-x_1) -\tilde{\epsilon}\delta(x-x_2)$ for some appropriate $x_2>x_1$ and $\tilde{\epsilon}$ (where $\delta$ is the Dirac delta function) . From this we can conclude that the probability of some cut not being an $\epsilon$-approximation is bounded by

\begin{equation*}\begin{split}
& n^2P(1)+\int\limits_1\limits^{\infty}P(\alpha)\frac{dn^{2\alpha}}{d\alpha}d\alpha = 2\delta e^{-k}+\frac{4\delta\ln(n)e^{-k}}{\ln(\frac{1}{\delta})+k}
\\
&\leq\delta\left(2+4\ln(n) \right)\exp(-k) = \delta
\end{split}\end{equation*}\qed
\end{prf}
 A drawback is that the theorem gives a bound that depends on the minimal cut, which we do not know, and unlike the situation in \cite{KarSpar} we cannot approximate it using the full graph. We can prove a bound that uses only known data about the graph. The following theorem shows we can lower bound $c$.

\begin{lemma}\label{cmin}
Let $G$ be a graph with weights $w_{ij}\in[0,1]$, with minimal cut $c$, and $\tilde{G}$ its approximation after sampling $m$ edges with minimal cut $\tilde{c}>0$. Define $p = {m}/{\tbinom{n}{2}}$ the probability to sample a single edge. Also define $l = \frac{3\ln(\frac{1}{\delta})}{4}$ and $\beta = \sqrt{1+\frac{l}{p\tilde{c}}}-\sqrt{\frac{l}{p\tilde{c}}}$. With probability greater then $1-\delta$ the following inequality holds - $c\geq \tilde{c}\cdot \beta^2$.
\end{lemma}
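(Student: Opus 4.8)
The plan is to relate the observed minimal cut $\tilde c$ to the true minimal cut $c$ through a single, fixed cut, which lets me avoid any union bound over cuts. Let $S^*$ be a cut achieving the minimum in $G$, so $|\partial_G S^*| = c$, and consider this same vertex set $S^*$ inside the sampled graph $\tilde G$. Writing $X_e$ for the indicator that edge $e$ is sampled, its weight there is $|\partial_{\tilde G}S^*| = \frac1p\sum_{e\in\partial S^*} X_e w_e$. The crucial and elementary observation is that, since $\tilde c$ is by definition the smallest cut of $\tilde G$, we automatically have $\tilde c \le |\partial_{\tilde G}S^*|$, i.e. $p\tilde c \le Z$ where $Z := \sum_{e\in\partial S^*}X_e w_e$. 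Hence it suffices to prove an upper-tail bound for $Z$ on this one cut; no simultaneous control over all cuts is required.

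Next I would apply concentration to $Z$. By linearity of expectation (the same reasoning as Lemma \ref{interLemma}, using $\mathds{E}[X_e]=p$), $\mu := \mathds{E}[Z] = p\sum_{e\in\partial S^*}w_e = pc$. The summands $X_e w_e$ lie in $[0,1]$, and since the $X_e$ are negatively dependent and multiplying by the nonnegative constants $w_e$ preserves negative dependence, the one-sided Chernoff--Hoeffding bound for negatively dependent variables (Theorem \ref{hoeffding}) applies to $Z$. Choosing the deviation $\epsilon = \sqrt{3\ln(1/\delta)/\mu}$ gives $P(Z \ge (1+\epsilon)\mu) \le \exp(-\epsilon^2\mu/3) = \delta$, so with probability at least $1-\delta$ we have $Z < \mu + \sqrt{3\ln(1/\delta)\,\mu}$.

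It then remains to invert this into the stated form. Combining $p\tilde c \le Z$ with the tail bound and dividing by $p$ yields $\tilde c < c + \sqrt{3\ln(1/\delta)\,c/p}$. Since $3\ln(1/\delta)/p = 4l/p$, this reads $\tilde c < c + 2\sqrt{l/p}\,\sqrt c$, a quadratic inequality in $\sqrt c$. Solving for the positive root gives $\sqrt c > \sqrt{\tilde c + l/p} - \sqrt{l/p}$, and factoring out $\sqrt{\tilde c}$ rewrites the right-hand side as $\sqrt{\tilde c}\,\bigl(\sqrt{1 + l/(p\tilde c)} - \sqrt{l/(p\tilde c)}\bigr) = \sqrt{\tilde c}\,\beta$; squaring gives exactly $c \ge \tilde c\,\beta^2$.

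The hard part, or rather the point requiring the most care, is the validity regime of the multiplicative bound: $P(Z\ge(1+\epsilon)\mu)\le\exp(-\epsilon^2\mu/3)$ holds for $\epsilon\le 1$, i.e. for $pc \ge 3\ln(1/\delta)$, so the conclusion is informative precisely once enough edges have been sampled that the true min-cut is hit a non-trivial number of times (the same regime in which a positive $\tilde c$ is meaningful); outside it one would resort to the large-deviation form of the bound. The two remaining steps that need a careful but routine justification are the preservation of negative dependence under the weighting $X_e\mapsto X_e w_e$, and the use of the one-sided (rather than two-sided) constant in Theorem \ref{hoeffding}, which is what makes the failure probability come out as exactly $\delta$ and matches the choice $l=\tfrac{3}{4}\ln(1/\delta)$.
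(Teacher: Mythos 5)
Your proof is correct and is essentially the paper's own argument: fix a minimal cut $S^*$ of $G$, bound $p\tilde{c}$ by the sampled weight of $\partial S^*$, apply the one-sided Chernoff--Hoeffding bound for negatively dependent variables with $\epsilon=\sqrt{3\ln(1/\delta)/(pc)}$, and complete the square to obtain $c\geq\tilde{c}\beta^2$. Your closing remark about the validity regime $\epsilon\leq 1$ of the $\exp(-\epsilon^2\mu/3)$ form is a caveat the paper's proof silently shares, so on that point you are if anything slightly more careful than the original.
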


\begin{prf}
Let $S$ be a subset of vertices such that $|\partial_G S| = c$ then from the Chernoff-Hoeffding inequality (the one-sided version)
\begin{equation*}\begin{split}
&P\left(|\partial_{\tilde{G}}S| \geq(1+ \epsilon)  |\partial_GS|\right) = P\left(p|\partial_{\tilde{G}}S| \geq(1+ \epsilon) p |\partial_GS|\right)
\\ &  \leq \exp\left(-\frac{\epsilon^2pc}{3}\right)
\end{split}\end{equation*}
Where we multiply by $p$ to have all the elements bounded by $1$.
Setting $\epsilon =\sqrt{ \frac{3\ln(\frac{1}{\delta})}{pc}}$ we get that with probability greater then $1-\delta$ that \\$pc\left( 1+\sqrt{ \frac{3\ln(\frac{1}{\delta})}{pc}}\right)=pc+\sqrt{3\ln(\frac{1}{\delta})}\sqrt{pc}\geq |\partial_{\tilde{G}}S|\geq p\tilde{c}$. By completing the square we get that 
\begin{equation*}
\left( \sqrt{cp}+\sqrt{\frac{3\ln(\frac{1}{\delta})}{4}}\right)^2 = \left( \sqrt{cp}+\sqrt{l}\right)^2\geq p\tilde{c}+l
\end{equation*}
which means (after some simple algebraic manipulation) that 
\begin{equation*}
c\geq \tilde{c}\beta^2
\end{equation*}\qed
\end{prf}
We can combine these to theorems and get

\begin{theorem}
Let $G$ be a graph with weights $w_{ij}\in[0,1]$ and $\tilde{G}$ its approximation after sampling $m$ edges with minimal cut $\tilde{c}>0$. Define  $\beta$ and $k$ as in previous theorems.  If $m\geq \binom{n}{2}\frac{3(2\ln(n)+\ln(\frac{2}{\delta})+k)}{\epsilon^2\beta^2\tilde{c}}$ then the probability that $\tilde{G}$ is not an $\epsilon$-cut approximation is smaller then $\delta$.
\end{theorem}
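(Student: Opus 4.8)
The plan is to combine the cut-approximation guarantee of Theorem \ref{origKar} with the data-dependent lower bound on the true minimal cut supplied by Lemma \ref{cmin}, splitting the failure probability evenly as $\delta/2+\delta/2$. Theorem \ref{origKar} certifies an $\epsilon$-cut approximation only once $m$ exceeds a threshold expressed through the \emph{unknown} true minimal cut $c$, whereas after sampling we can only measure the observed cut $\tilde c$. Lemma \ref{cmin} is precisely the bridge: with high probability it guarantees $c\geq\tilde c\beta^2$, so that the observable quantity $\tilde c\beta^2$ is a legitimate surrogate for $c$ inside the threshold.

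First I would apply Lemma \ref{cmin} with confidence parameter $\delta/2$, so that the event $A=\{c\geq\tilde c\beta^2\}$ has probability at least $1-\delta/2$. The key algebraic step is monotonicity of the threshold in the denominator: on $A$ we have $\beta^2\tilde c\leq c$, whence
\begin{equation*}
\binom{n}{2}\frac{3\left(2\ln(n)+\ln(\tfrac{2}{\delta})+k\right)}{\epsilon^2\beta^2\tilde c}\;\geq\;\binom{n}{2}\frac{3\left(2\ln(n)+\ln(\tfrac{2}{\delta})+k\right)}{\epsilon^2 c}.
\end{equation*}
Thus the hypothesis of the present theorem drives $m$ above exactly the threshold that Theorem \ref{origKar} requires when it is invoked with failure probability $\delta/2$; this is the reason $\ln(1/\delta)$ is replaced by $\ln(2/\delta)$, while $k$ is unchanged. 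Applying Theorem \ref{origKar} then bounds the probability that $\tilde G$ fails to be an $\epsilon$-cut approximation by $\delta/2$.

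I would finish with a union bound. Writing $B$ for the event that $\tilde G$ is an $\epsilon$-cut approximation and $H$ for the (observable) event that $m$ meets the stated threshold, I would bound $P(H\cap B^c)\leq P(H\cap B^c\cap A)+P(A^c)$, where the second term is at most $\delta/2$. The main obstacle is that $A$, $B$ and $H$ are all functions of the same random sample and, crucially, $\tilde c$ (hence $\beta$ and the threshold itself) is random while $c$ is a fixed property of $G$; one cannot simply treat the theorem's hypothesis as a deterministic constraint on $m$. The clean resolution is to note that, by the displayed inequality, $H\cap A$ is contained in the \emph{deterministic} event $\bigl\{m\geq\binom{n}{2}\tfrac{3(2\ln(n)+\ln(2/\delta)+k)}{\epsilon^2 c}\bigr\}$. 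If that event holds then Theorem \ref{origKar} applies unconditionally and bounds $P(B^c)$ by $\delta/2$; if it fails then the event is empty, which forces $H\cap A=\emptyset$ and makes the term vanish. Either way $P(H\cap B^c)\leq\delta/2+\delta/2=\delta$, and the remaining steps are the routine bookkeeping already performed in the two cited results.
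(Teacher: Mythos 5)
Your proposal is correct and follows essentially the same route as the paper's own (one-line) proof: apply Lemma \ref{cmin} with failure probability $\delta/2$, feed the resulting bound $c\geq\tilde c\beta^2$ into Theorem \ref{origKar} invoked with failure probability $\delta/2$ (hence the $\ln(2/\delta)$), and finish with a union bound. Your additional bookkeeping --- noting that $\tilde c$, $\beta$, and thus the threshold are random, and reducing to the deterministic event $\bigl\{m\geq\binom{n}{2}\tfrac{3(2\ln(n)+\ln(2/\delta)+k)}{\epsilon^2 c}\bigr\}$ via the containment of $H\cap A$ --- is a more careful treatment of a subtlety the paper glosses over, but it does not change the argument's structure.
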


\begin{prf}
This is just using lemma \ref{cmin} with error probability $\frac{\delta}{2}$ and using that $c$ for theorem \ref{origKar} with the same error probability and the union bound. \qed
\end{prf}

This theorem gives a high probability bound that depends only on observable quantities. While the notation is a bit cumbersome, it is easy to see that if $p\tilde{c} \gg \ln(\frac{1}{\delta}) $, i.e. the unscaled weight of the smallest cut is not too small, then $\beta \approx 1$ and we have a bound that is almost as good as if we knew the real $c$.\\

We will now prove theorem \ref{clustering} in the paper.
 
\begin{trm}[\ref{clustering}]
Let $G$ be a graph with weights $w_{ij}\in[0,1]$ and $\tilde{G}$ its approximation after observing $m$ edges. Assume $G$ is partitioned into $\ell$ clusters each has minimal cut greater or equal to $c_{in}$, and the cuts separating clusters from the others is smaller then $c_{out}$. Furthermore assume $c_{in}>4c_{out}$. If $m\geq\frac{12n^2}{c_{in}}\left(2\ln(n)+\ell\ln(\frac{2}{\delta})+k\right)$ then the cuts separating the clusters are smaller then any cut that cuts into one of the clusters.
\end{trm}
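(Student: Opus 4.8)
The plan is to split the cuts of $G$ into two classes and control each with a different tool. Call a cut \emph{internal} if it splits at least one cluster (it ``cuts into'' a cluster), and \emph{separating} if it leaves every cluster intact and merely partitions the $\ell$ clusters into two groups. I would show that, with probability at least $1-\delta$, every internal cut of $\tilde G$ has weight at least $c_{in}/2$ while every separating cut has weight strictly below $c_{in}/2$. The conclusion then follows at once, since the minimum weight over internal cuts would exceed the maximum weight over separating cuts.

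For the internal cuts, I would apply the cut-approximation guarantee of Theorem \ref{origKar} to the induced subgraph on each cluster, with accuracy $\epsilon=1/2$ and failure probability $\delta/(2\ell)$, taking a union bound over the $\ell$ clusters. The key observation is that every edge is sampled with the same probability $p=m/\binom{n}{2}$, so the same guarantee applies within each cluster; here I would verify that the assumed budget $m=\tilde{\Omega}(n^2 c_{in}^{-1}(2\ln n+\ell\ln(2/\delta)+k))$ makes $p$ large enough to satisfy the per-cluster hypothesis of Theorem \ref{origKar} (with $c=c_{in}$, $\epsilon=1/2$, and $\ln(n_i)\le\ln(n)$). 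On this event, for any cut $S$ that splits some cluster $i$, restricting $S$ to cluster $i$ yields a nontrivial internal cut of weight $\ge c_{in}$ in $G$, hence $\ge c_{in}/2$ in $\tilde G$; since all weights are nonnegative, the full cut $|\partial_{\tilde G}S|$ is at least this much, so every internal cut is $\ge c_{in}/2$.

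For the separating cuts I would bound each one directly with the Hoeffding--Chernoff bound for negatively dependent variables (Theorem \ref{hoeffding}). A separating cut has weight at most $c_{out}$ in $G$, so writing $Y$ for its unscaled sampled weight we have $\mathbb E[Y]\le p\,c_{out}$, while the target threshold $c_{in}/2$ corresponds to $p\,c_{in}/2$. Because $c_{in}>4c_{out}$ this threshold exceeds twice the mean, placing us in a large-deviation regime where the tail probability decays like $\exp(-\Theta(p\,c_{in}))$. There are at most $2^{\ell}$ separating cuts, so I would union-bound over them and require the per-cut failure probability to be below $2^{-\ell}\delta/2$. This is precisely where the $\ell\ln(2/\delta)$ term in the budget enters: it forces the exponent $\Theta(p\,c_{in})=\Theta(2\ln n+\ell\ln(2/\delta)+k)$ to be large enough that the $2^{\ell}=e^{\ell\ln 2}$ union-bound factor is absorbed and the total separating-cut failure probability is at most $\delta/2$.

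A final union bound over the two events gives overall failure probability at most $\delta$, and on the good event every separating cut is $<c_{in}/2\le$ every internal cut, which is the claim. The step I expect to be the main obstacle is the separating-cut bound: one must handle the large-deviation regime $\epsilon>1$ of the multiplicative Chernoff bound carefully, since the clean $\exp(-\epsilon^2\mathbb E[Y]/3)$ form is sharp only for $\epsilon\le 1$, and one must simultaneously make the decay rate beat the exponentially many ($2^{\ell}$) separating cuts. Checking that the stated budget $m$ delivers both the per-cluster hypothesis of Theorem \ref{origKar} and the required margin in this exponent is the crux of the calculation.
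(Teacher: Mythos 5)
Your proposal follows essentially the same route as the paper's proof: the same split into cuts that split a cluster (handled by applying Theorem \ref{origKar} within each cluster at $\epsilon=1/2$ with a union bound over the $\ell$ clusters) and cuts that separate whole clusters (handled by the negatively dependent Chernoff--Hoeffding bound with a union bound over the at most $2^\ell$ such cuts, which is exactly where the $\ell$ term in the budget is spent). The large-deviation subtlety you flag is real---the paper itself applies the $\exp(-\epsilon^2\mathds{E}[Y]/3)$ form even though $\tilde{\epsilon}=\frac{c_{in}}{2c_{out}}-1>1$---but it resolves harmlessly, since the linear-regime exponent $\tilde{\epsilon}\,p\,c_{out}/3\geq p\,c_{in}/12$ already yields the stated budget.
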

\begin{prf}
After seeing $m$ edges, the probability for sampling any edge inside any cluster is $p={m}/{\tbinom{n}{2}}$. By theorem \ref{origKar} we have that if $m\geq\frac{12n^2}{c_{in}}\left(2\ln(n)+\ln(\frac{2^\ell}{\delta})+k\right)$ then the probability of any cut in a single cluster being smaller then $\frac{c_{in}}{2}$ is smaller then $\frac{\delta}{2\ell}$, with the union bound we have that with probability greater then $1-\frac{\delta}{2}$ all cuts in any cluster (and therefore any cut in $\tilde{G}$ that cuts some cluster) have weights greater or equal to $\frac{c_{in}}{2}$.\\

We now need to show that the cuts separating the clusters are not too large. Consider a cut separating some clusters from the others. If the weight of this cut is $c$ we need to show that with probability greater then $1-\frac{\delta}{2^{\ell+1}}$ we have $\tilde{c}<\frac{c_{in}}{2}$. This means that we want to show that $\tilde{c}<(1+\tilde{\epsilon})c\leq(1+\tilde{\epsilon})c_{out}=\frac{c_{in}}{2}$, i.e. we can use the negatively dependent Chernoff-Hoeffding inequality (theorem \ref{hoeffding}) with $\tilde{\epsilon}=\frac{c_{in}}{2c_{out}}-1>\frac{c_{in}}{4c_{out}}$ (using the fact that $c_{in}>4c_{out}$) and get that the $P(p\tilde{c}-pc>(1+\tilde{\epsilon})pc_{out})\leq\exp\left(-\frac{\tilde{\epsilon}^2pc_{out}}{3}\right)\leq \exp\left(-\frac{pc_{in}}{12}\right)$. As $m\geq\frac{12n^2}{c_{in}}\ln(\frac{2^\ell}{\delta})$ we can finish the proof. \qed
\end{prf}

\section{Adaptive Sampling with replacement}
%\$todo tod !! I took what I wrote on adaptive sampling (with proof) and stuck it in the appendix. I think we can still keep that proof in the appendix !! todo todo\\

%It has been shown, in the context of graph sparsification, that by sampling with probability proportional to the connectivity, strong connectivity, effective resistance etc. (!! cite) one can get a good approximation. We cannot do this, as the graph is unknown, but it would seem wise to use an adaptive algorithm that probes with higher probability in areas that so far are less connected (or by some other scheme that comes to mind). \\

While we found that adaptive sampling with replacements did not perform as well as without replacements in practice, for completeness we will present her a proof that it has the same theoretical  guarantees as uniform sampling for cut approximation.

Let $\tilde{G}_i$ be the graph build at step $i$, an adaptive sampling algorithm is an algorithm who picks an edge at step $i+1$ with probability $p(e;\tilde{G}_i)$ that depends on $\tilde{G}_i$. In order to prove that with high probability $\tilde{G}=\tilde{G}_m$ is a $\epsilon$-approximation of $G$ for $m=o(n^2)$ we need that $p(e;\tilde{G}_i)$ isn't too small on any edge. This can be easily done by sampling according to a modified distribution - with probability $0.5$ pick an edge uniformly, and with probability $0.5$ pick it according to $p(e;H_i)$. The new distribution satisfies $\tilde{p}(e;\tilde{G}_i) = \frac{1}{2}p(e;\tilde{G}_i)+\frac{1}{n(n-1)}>\frac{1}{n^2}$.\\

%\begin{algorithm}[H]
%\SetKwInOut{Input}{Input}
%\SetKwInOut{Output}{Output}
%\Input{size of graph $n$, budget $m$, graph dependent distribution $p(e;H)$}%

%$\tilde{G}_0 \leftarrow 0 $ \tcp{graph with all zero weights}
%\For{$i=1:m$}{
%       $\tilde{G}_i = \tilde{G}_{i-1}$\;
%        $\tilde{p}(e) = \frac{1}{2}p(e;\tilde{G}_{i-1})+\frac{1}{n(n-1)}$\;
%       $e \leftarrow $select random edge according to $\tilde{p}$\;
%        $w(e) \leftarrow$ weight of edge $e$ in $G$\;
%        add $\frac{1}{\tilde{p}(e)}w(e)$ to the weight of edge $e$ in %$\tilde{G}_i$\;
%      }
%%$\tilde{G} \leftarrow \frac{1}{m}\tilde{G}_m$\;
%\Output{$\tilde{G}$}
%\caption{Generic Bounded Adaptive Sampling Algorithm (BASA)}
%\end{algorithm}

The graphs $\tilde{G}_i$ are by no means independent. Although one can view (after subtracting the mean) them as a martingale process, using the method of bounded differences \cite{concentration} will not suffice, as it depends on the square of the bounding constant, so we will have a $n^4$ factor that only gives a trivial bound. We will next show that a high probability bound does exists. \\

Consider a cut with weight $c$ that contains the edges $e_1,...,e_l$ and consider any bounded adaptive sampling algorithm with replacements with $m$ steps. Define $X_{ik}$ with $1\leq i \leq l$ and $1\leq k \leq m$ to be the random variable that has value $\frac{w(e_i)}{\tilde{p}(e_i)}$ if the edge $e_i$ was chosen at step $k$ and zero otherwise. Define $Y_k=\sum\limits_{i=1}\limits^l X_{ik}$, $Y_k$ is the weight added to the cut at step $k$ and its expectation is $c$.

\begin{lemma}\label{boundLemma}
If $\forall i,l:\, \tilde{p}(e_i)\geq \rho$ and $w(e_i)\leq 1$ then 
\begin{equation*}
\mathds{E}[\exp(t\rho Y_k)|\tilde{G}_{k-1}] \leq \exp(c\rho(e^t-1))
\end{equation*}
\end{lemma}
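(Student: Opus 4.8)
The plan is to compute the one-step conditional moment generating function of $Y_k$ directly, using the fact that a single edge is drawn at step $k$. Conditioned on $\tilde{G}_{k-1}$ the probabilities $\tilde{p}(e_i)$ are deterministic, and exactly one of two things happens: one of the cut edges $e_i$ is selected (probability $\tilde{p}(e_i)$), giving $Y_k = w(e_i)/\tilde{p}(e_i)$, or some non-cut edge is selected (probability $1-\sum_{i=1}^l\tilde{p}(e_i)$), giving $Y_k=0$. Since the $e_i$ are distinct this residual probability is a well-defined nonnegative number, and the conditional MGF is exactly $1-\sum_{i=1}^l\tilde{p}(e_i) + \sum_{i=1}^l\tilde{p}(e_i)\exp(t\rho\,w(e_i)/\tilde{p}(e_i))$.

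The one genuine idea is that the scaling factor $\rho$ pulls each exponent into the unit interval. From the hypotheses $\tilde{p}(e_i)\geq\rho$ and $w(e_i)\leq 1$ we get $\rho\,w(e_i)/\tilde{p}(e_i)\leq 1$, so I can apply the convexity inequality (the chord bound) $e^{tx}\leq 1+x(e^t-1)$, valid for $x\in[0,1]$, with $x=\rho\,w(e_i)/\tilde{p}(e_i)$. Multiplying through by $\tilde{p}(e_i)$ cancels the denominator, yielding the per-edge bound $\tilde{p}(e_i)\exp(t\rho\,w(e_i)/\tilde{p}(e_i))\leq \tilde{p}(e_i)+\rho\,w(e_i)(e^t-1)$.

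Next I would sum over $i$. The terms $\sum_i\tilde{p}(e_i)$ cancel against the leading $1-\sum_i\tilde{p}(e_i)$ in the MGF, and since $\sum_{i=1}^l w(e_i)=c$ (which is also why $\mathds{E}[Y_k\mid\tilde{G}_{k-1}]=c$), I am left with $1+c\rho(e^t-1)$. A final use of $1+x\leq e^x$ upgrades this to $\exp(c\rho(e^t-1))$, which is exactly the claim.

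I do not expect a substantive obstacle: once the conditional MGF is written out, everything reduces to the chord inequality plus bookkeeping. The only point requiring care is that the rescaling is precisely what forces the exponents into $[0,1]$, which is the reason the mixed sampling distribution was designed to satisfy $\tilde{p}(e)>1/n^2$ and thus to admit a usable lower bound $\rho$; without such a bound the linearization step would fail and the per-step MGF could not be controlled.
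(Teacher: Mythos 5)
Your proof is correct, but it combines the edge terms differently from the paper. The paper's proof observes that the variables $X_{ik}$ (at most one of which is nonzero at a fixed step $k$) are negatively dependent conditioned on $\tilde{G}_{k-1}$, bounds the conditional MGF of the sum $Y_k=\sum_i X_{ik}$ by the \emph{product} $\prod_i \mathds{E}[\exp(t\rho X_{ik})\mid\tilde{G}_{k-1}]$, and then bounds each factor by showing that $q\mapsto q\,e^{t\rho w/q}+(1-q)$ is monotonically decreasing and substituting $q=\rho\,w(e_i)$, giving $\rho w(e_i)(e^t-1)+1\leq\exp(\rho w(e_i)(e^t-1))$ per factor. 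You instead exploit the fact that exactly one edge is drawn per step to write the conditional MGF of $Y_k$ \emph{exactly} as $1-\sum_i\tilde{p}(e_i)+\sum_i\tilde{p}(e_i)\exp\bigl(t\rho\,w(e_i)/\tilde{p}(e_i)\bigr)$, and then apply the chord bound $e^{tx}\leq 1+x(e^t-1)$ for $x=\rho\,w(e_i)/\tilde{p}(e_i)\in[0,1]$; summing gives $1+c\rho(e^t-1)$ and one final $1+x\leq e^x$ closes the argument. Amusingly, your per-edge inequality is identical to the paper's (the chord bound at $x=\rho w/\tilde{p}$, after adding $1-\tilde{p}$ to both sides, is exactly the monotonicity substitution), so the two proofs differ only in the aggregation step: your sum-based computation is exact, slightly sharper at the intermediate stage, and entirely elementary — in particular it sidesteps the paper's unproved (though true) assertion that variables supported on disjoint events are negatively dependent, and the associated MGF product inequality. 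What the paper's product route buys in exchange is robustness: it would survive a variant in which several edges are sampled per step, where your exact disjointness computation breaks down but negative dependence can still hold; it also matches the negative-dependence machinery used elsewhere in the paper (Theorem \ref{hoeffding} and Theorem \ref{origKar}), keeping the arguments uniform.
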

\begin{prf}
Since at most one of the positive variables $X_{ik}$  is nonzero for a constant $k$ then they are negatively dependent when conditioned by $\tilde{G}_{k-1}$. This implies that  $\mathds{E}[\exp(t\rho Y_k)|\tilde{G}_{k-1}] \leq\prod\limits_{i=1}\limits^l \mathds{E}[\exp(t\rho X_{ik})|\tilde{G}_{k-1}] $. By definition of  $X_{ik}$ we get that 
\begin{equation}\label{temp1}
\mathds{E}[\exp(t\rho X_{ik})|\tilde{G}_{k-1}] = \tilde{p}(e_i)\cdot\exp\left(\frac{t\rho w(e_i)}{\tilde{p}(e_i)}\right)+(1-\tilde{p}(e_i))
\end{equation}
 One can easily verify that the right hand side of equation \ref{temp1} decreases monotonically with $\tilde{p}(e_i)$, so the fact that $\rho <\tilde{p}(e_i)$ and $w(e_i)\leq 1$ implies that
\begin{equation*}\label{temp2}
\mathds{E}[\exp(t\rho X_{ik})|\tilde{G}_{k-1}] \leq \rho w(e_i)e^t+(1-\rho w(e_i))=
\end{equation*}
\begin{equation*}
= \rho w(e_i) (e^t-1) +1 \leq \exp(\rho w(e_i)(e^t-1) )
\end{equation*}
Where the last inequality is due to the fact that for $1+x<e^x$. We can finish the proof since 
\begin{equation*}\begin{split}
&\mathds{E}[\exp(t\rho Y_k)|\tilde{G}_{k-1}] \leq\prod\limits_{i=1}\limits^l \mathds{E}[\exp(t\rho X_{ik})|\tilde{G}_{k-1}] \\
&\leq \exp(\rho c (e^t-1)).
\end{split}\end{equation*}
as $\sum w(e_i) = c$.\qed
\end{prf}
We can now prove the concentration of measure bound for a single cut

\begin{theorem}
Let $G$ be a graph such that $w(e_i)\leq 1$ and $\tilde{G}=\tilde{G}_m$ the output of a bounded adaptive sampling algorithm with replacements such that $\tilde{p}(e_i)\geq \rho$ then the probability that a cut with weight $c$ in $\tilde{G}_m$ is not a $\epsilon$-approximation is bounded by $2\exp\left(-\frac{\epsilon^2\rho m c}{3}\right)$.
\end{theorem}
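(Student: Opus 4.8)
The plan is to run the standard exponential-moment (Chernoff) argument, but driven by the \emph{conditional} moment generating functions supplied by Lemma \ref{boundLemma}, so as to accommodate the adaptive (hence non-independent) structure of the $Y_k$. Write $Z=\sum_{k=1}^m Y_k$ for the total re-scaled weight accumulated on the fixed cut. Since the cut weight reported in $\tilde{G}_m$ is $\frac{1}{m}Z$ and $\mathds{E}[Z]=mc$ (because $\mathds{E}[Y_k]=c$ for every $k$), the event that the cut fails to be an $\epsilon$-approximation is exactly $\{|Z-mc|\ge\epsilon mc\}$. First I would control the upper tail $P(Z\ge(1+\epsilon)mc)$ and then mirror the argument for the lower tail.

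The central step is to lift the one-step estimate of Lemma \ref{boundLemma} to a bound on the full generating function $\mathds{E}[\exp(t\rho Z)]$. I would peel off one step at a time using the tower property: conditioning on $\tilde{G}_{m-1}$, the factor $\exp(t\rho\sum_{k<m}Y_k)$ is $\tilde{G}_{m-1}$-measurable and can be pulled out, while Lemma \ref{boundLemma} bounds the remaining conditional expectation by the \emph{constant} $\exp(c\rho(e^t-1))$. Iterating down to $k=1$ yields
\begin{equation*}
\mathds{E}[\exp(t\rho Z)]\le \exp\left(mc\rho(e^t-1)\right)
\end{equation*}
for every $t\ge 0$. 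Markov's inequality applied to $\exp(t\rho Z)$ then gives $P(Z\ge(1+\epsilon)mc)\le\exp\left(mc\rho\left(e^t-1-t(1+\epsilon)\right)\right)$, and choosing the minimizer $t=\ln(1+\epsilon)$ turns the exponent into $mc\rho\left(\epsilon-(1+\epsilon)\ln(1+\epsilon)\right)$.

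To conclude I would invoke the elementary inequality $e^\epsilon/(1+\epsilon)^{1+\epsilon}\le\exp(-\epsilon^2/3)$, valid for $0\le\epsilon\le1$, giving $P(Z\ge(1+\epsilon)mc)\le\exp(-\epsilon^2\rho mc/3)$; the symmetric choice $t=\ln(1-\epsilon)<0$ handles the lower tail with an even smaller exponent, and a union bound over the two tails supplies the factor $2$, producing the claimed $2\exp(-\epsilon^2\rho mc/3)$. The only genuinely delicate point is the martingale bookkeeping in the lifting step: I must check that $\tilde{p}(e_i;\tilde{G}_{k-1})\ge\rho$ holds at every step so that Lemma \ref{boundLemma} applies with the same $\rho$ uniformly in $k$, and that the partial sum is measurable with respect to the conditioning encoded by $\tilde{G}_{k-1}$. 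Once these are secured, the rest is the routine Chernoff optimization; in particular, the per-step negative dependence of the $X_{ik}$ has already been absorbed into Lemma \ref{boundLemma} and needs no further attention here.
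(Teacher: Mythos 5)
Your proposal is correct and follows essentially the same route as the paper's proof: a Chernoff-type exponential-moment argument where the tower property peels off one step at a time, Lemma \ref{boundLemma} bounds each conditional moment generating function by $\exp(c\rho(e^t-1))$, induction yields $\mathds{E}[\exp(t\rho Z)]\leq\exp(mc\rho(e^t-1))$, and the standard optimization plus a union bound over the two tails gives $2\exp\left(-\frac{\epsilon^2\rho mc}{3}\right)$. Your version is in fact slightly more explicit than the paper's, which leaves the final optimization (your $t=\ln(1+\epsilon)$ and the inequality $e^\epsilon/(1+\epsilon)^{1+\epsilon}\leq e^{-\epsilon^2/3}$ for $0\leq\epsilon\leq1$) as ``following the steps as in the standard Chernoff bound proof.''
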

\begin{prf}
We need to show that 
\begin{equation*}
P\left(\left|\sum_{k=1}^mY_k-mc\right|>\epsilon m c\right)\leq 2\exp\left(-\frac{\epsilon^2\rho m c}{3}\right)
\end{equation*}
The proof is similar to the proof of the Chernoff bound, replacing independence with lemma \ref{boundLemma}. First look at $P\left(\sum_{k=1}^mY_k>(1+\epsilon) m c\right)$. Using the standard trick for all $t>0$
\begin{equation*}\begin{split}
&P\left(\sum_{k=1}^mY_k>(1+\epsilon) m c\right)=\\
&P\left(\exp\left(t\rho\sum_{k=1}^mY_k\right)>\exp(t(1+\epsilon) \rho m c)\right)
\end{split}\end{equation*}
By the Markov inequality this is bounded by $\frac{\mathds{E}\left[\exp\left(t\rho\sum\limits_{k=1}\limits^m Y_k \right)\right]}{\exp(t(1+\epsilon) \rho m c)}$. The law of total expectation states that 
$\mathds{E}\left[\exp\left(t\rho\sum\limits_{k=1}\limits^m Y_k \right)\right]=\mathds{E}\left[\mathds{E}\left[\exp\left(t\rho\sum\limits_{k=1}\limits^m Y_k \right)|\tilde{G}_{m-1}\right]\right]$. As $\sum\limits_{k=1}\limits^{m-1}Y_{k}$ is a deterministic function of $\tilde{G}_{m-1}$ this is equal to 
\begin{equation*}\begin{split}
&\mathds{E}\left[\mathds{E}\left[\exp(t\rho Y_m)|\tilde{G}_{m-1}\right]\exp\left(t\rho\sum\limits_{k=1}\limits^{m-1} Y_k \right)\right] \\
& \leq \mathds{E}\left[\exp\left(t\rho\sum\limits_{k=1}\limits^{m-1} Y_k \right)\right]\exp(\rho c (e^t-1)).
\end{split}\end{equation*}
using lemma \ref{boundLemma}. By induction we can conclude that the expectation is smaller then $\exp(\rho mc (e^t-1))$. We have shown that 
\begin{equation*}
P\left(\sum_{k=1}^mY_k>(1+\epsilon) m c\right)\leq\frac{\exp(\rho mc (e^t-1))}{\exp(t(1+\epsilon) \rho m c)}
\end{equation*}
Following the steps as in the standard Chernoff bound proof one can show that this is smaller (for the right $t$) then $\exp\left(-\frac{\epsilon^2\rho m c}{3}\right)$. The proof for this bound on $P\left(\sum_{k=1}^mY_k<(1-\epsilon) m c\right)$ is done in a similar fashion, and using the union bound we finish our proof.
\end{prf}

Using $\rho = \frac{1}{n^2}$ one can now show similar theorems to what we shown in the previous section with this theorem replacing the (negatively dependent) Chernoff bound.
\section{Adaptive Sampling without Replacements}
For a specific graph one can always design a bad biased sampling scheme. Consider an adversarial scheme that always samples the largest weight edge between two constant clusters, it is easy to see that this can lead to bad cut clustering. To circumvent this we will consider graphs where the edge weights between the clusters, which we regard as noise, are picked randomly. 

\begin{assum}\label{assAtapCut1}
Assume $G$ can be partitioned into $k$ clusters of size $\Omega(n)$, within which the minimal cut
is at least $c_{in}=\Omega(n^\alpha)$.
\end{assum}
\begin{assum}
 Assume that the weights of edges between the clusters are $0$, besides $c_{out}=o(n^\alpha)$  edges  chosen uniformly at randomly (without replacement) between any two clusters that have weight $1$.
\end{assum}

\begin{theorem}
Let $\tilde{G}$ be the graph after sampling  $m=\tilde{\Omega}\left(n^{2-\beta}k\ln(\frac{1}{\delta})\right)$ edges  without replacements (with probability $1/2$ of sampling uniformly) with $\beta<\alpha$. Let $\tilde{c}_{in} $ and $\tilde{c}_{out}$ be the minimal cut weight inside any cluster and the maximal cut weight between clusters, under previous assumptions the probability that $\tilde{c}_{in}<\tilde{c}_{out}$ is smaller then $\delta$.
\end{theorem}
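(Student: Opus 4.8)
The plan is to show that with probability at least $1-\delta$ the two quantities separate, by bounding $\tilde c_{in}$ from below and $\tilde c_{out}$ from above on a single good event, and then checking that the lower bound dominates for large $n$. Write $p=m/\binom{n}{2}=\tilde\Omega(n^{-\beta}k\ln(1/\delta))$. The guiding intuition is that the within-cluster cuts collect enough edges through the \emph{uniform} half of the sampling distribution alone, whereas the weight-$1$ between-cluster edges are both rare and randomly hidden, so that no adaptive rule can harvest many of them. Since $c_{in}=\Omega(n^\alpha)$ while $c_{out}=o(n^\alpha)$, the two relevant scales differ by a factor $c_{out}/c_{in}=o(1)$, and this is ultimately what produces the gap.

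First I would lower-bound $\tilde c_{in}$. Every adaptive step only \emph{adds} an edge, so for a lower bound I may discard the adaptive edges and retain only those taken on uniform steps; a Chernoff bound on the $\mathrm{Binomial}(m,1/2)$ number of uniform steps keeps at least $m/4$ of them. Up to the negligible shrinking of the unseen pool by the $\le m\ll\binom{n}{2}$ adaptive edges, these behave as a uniform-without-replacement sample, so I can apply the reasoning of Theorem~\ref{origKar} --- Karger's cut-counting bound (Lemma~\ref{combLemma}) together with the negative-dependence Chernoff inequality (Theorem~\ref{hoeffding}) --- to the internal cuts of each cluster, with per-edge probability $p$ and accuracy $\epsilon=1/2$. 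Because $m/4=\tilde\Omega(n^{2-\beta})$ comfortably exceeds the $\tilde O(n^{2-\alpha})$ edges this requires (here $\beta<\alpha$ is used), and after undoing the $1/p$ rescaling and taking a union bound over the $k$ clusters, I obtain $\tilde c_{in}\ge\tfrac12\,p\,c_{in}$ with probability at least $1-\delta/2$.

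Next I would upper-bound $\tilde c_{out}$, and this is where the main obstacle lies. Since weights are never rescaled, the weight of \emph{any} between-cluster cut is at most $T$, the total number of weight-$1$ between-cluster edges the algorithm ever samples, so it suffices to control $T$. The difficulty is that the sampling is adaptive and potentially adversarial, so I must show that the random placement of the weight-$1$ edges defeats any bias: conditioned on the full observed history $\tilde G_i$ and the algorithm's coins, every previously-unseen between-cluster edge it probes is weight-$1$ with probability at most $q:=O(c_{out}/n^2)$. This rests on an exchangeability argument --- given the revealed edges and their weights, the unrevealed weight-$1$ edges are still uniform over the unrevealed between-cluster positions, of which there are $\Omega(k^2 n^2)$ (each cluster has size $\Omega(n)$) against at most $\binom{k}{2}c_{out}$ surviving weight-$1$ edges --- the point being that $\tilde G_i$ carries no information about the hidden positions beyond excluding those already seen. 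Given this conditional bound, the sequence of ``new weight-$1$ between-cluster hits'' is dominated step by step by $\mathrm{Bernoulli}(q)$ variables, and a supermartingale Chernoff bound in the spirit of Lemma~\ref{boundLemma} gives $T\le 2mq+O(\ln(1/\delta))=O(p\,c_{out})+O(\ln(1/\delta))$ with probability at least $1-\delta/2$, using $mq=\Theta(p\,c_{out})$.

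Finally I would combine the estimates. On the event of probability at least $1-\delta$ where both hold,
\[
\frac{\tilde c_{out}}{\tilde c_{in}}\le\frac{O(p\,c_{out})+O(\ln(1/\delta))}{\tfrac12\,p\,c_{in}}=O\!\left(\frac{c_{out}}{c_{in}}\right)+O\!\left(\frac{\ln(1/\delta)}{p\,c_{in}}\right).
\]
The first term is $o(1)$ since $c_{out}=o(n^\alpha)$ and $c_{in}=\Omega(n^\alpha)$; the second is $o(1)$ since $p\,c_{in}=\tilde\Omega(n^{\alpha-\beta}k\ln(1/\delta))\to\infty$, which is exactly where $\beta<\alpha$ and the $k\ln(1/\delta)$ factor in $m$ enter. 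Hence $\tilde c_{out}<\tilde c_{in}$ for $n$ large enough, as claimed. I expect the exchangeability step controlling $T$ against an arbitrary adaptive sampler to be the crux; the inner-cut bound is essentially Theorem~\ref{origKar} applied cluster-by-cluster.
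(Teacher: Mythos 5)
Your proposal is correct, and its overall skeleton matches the paper's: lower-bound $\tilde{c}_{in}$ by applying Theorem \ref{origKar} to the uniform half of the samples (noting adaptive picks only add weight), and upper-bound $\tilde{c}_{out}$ via the key observation that the uniformly random placement of the weight-$1$ between-cluster edges neutralizes any adaptive strategy. Where you genuinely diverge is in how that observation is exploited. The paper converts it into an equivalence with uniform sampling of a fixed edge set, bounds the \emph{expected} between-cluster cut weight by $\tilde{\mathcal{O}}(m\,c_{out}/n^2)$ via Lemma \ref{interLemma} (pessimistically treating all $m$ samples as probes of that cut), and finishes with Markov's inequality, so its failure probability for the outer cuts is only $o(1)$ as $n\to\infty$ rather than an explicit function of $\delta$. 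You instead run the exchangeability argument conditionally, step by step: given the history, any probed unseen between-cluster edge is weight-$1$ with conditional probability $O(c_{out}/n^2)$, so the hit count $T$ is dominated by a binomial and a supermartingale Chernoff bound in the spirit of Lemma \ref{boundLemma} yields $\tilde{c}_{out}\le O(p\,c_{out})+O(\ln(1/\delta))$ with probability $1-\delta/2$. This is more work but strictly stronger, giving genuine high-probability control with explicit $\ln(1/\delta)$ dependence where the paper settles for Markov; your conditional-domination step is exactly the rigorous content behind the paper's one-line "equivalent to uniform sampling of a constant edge set." Two minor remarks: both treatments gloss at the same level of rigor over the depletion of the unseen pool by adaptive picks when invoking Theorem \ref{origKar} on the uniform steps (you at least flag it, and the depleted cut edges already sit in $\tilde{G}$, which is the repair); and your count of $\Omega(k^2 n^2)$ unrevealed between-cluster positions is loose phrasing --- what your argument actually uses, and what the assumption of $\Omega(n)$-size clusters (which forces $k=O(1)$) delivers, is the per-pair count $\Omega(n^2)$, which suffices for $q=O(c_{out}/n^2)$.
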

\begin{prf}
Using theorem \ref{origKar} on the edges sampled uniformly (remembering that the biased sampling can only increase the cut weight) we get that with probability greater then $\delta/2$, $\tilde{c}_{in}=\tilde{\Omega}\left(\frac{m}{n^2}c_{in}\right)=\tilde{\Omega}(n^{\alpha-\beta})$.  If we consider the weight of any cut between clusters, then the key observation is that because the edges are picked uniformly at random, then whatever the algorithm does is equivalent to running a uniform sampling of a constant edge set. We then get that the expected minimal cut weight is $\tilde{\mathcal{O}}(\frac{m\cdot c_{out}}{n^2})=o(n^{\alpha-\beta})$ using lemma \ref{interLemma} (the upper bound is by looking as if all edges where picked from this cut). We can now use the Markov inequality to show $P(\tilde{c}_{out}/\tilde{c}_{in}<1)=\frac{o(n^{\alpha-beta})}{\Omega(n^{\alpha-\beta})}<\delta/2$. 
\end{prf}
It is simple to generalize this theorem to any uniform weighting that has $o(c_{out})$ expected cut weights.
\clearpage


\begin{thebibliography}{10}

\bibitem{cutsVision}
Graph cuts in vision and graphics: Theories.
\newblock {\em Handbook of Mathematical Models of Computer Vision}, 2006.

\bibitem{specClus}
M.~I.~Jordan A.~Y.~Ng and Y.~Weiss.
\newblock On spectral clustering: Analysis and an algorithm.
\newblock {\em Advances in Neural Information Processing Systems}, 2001.

\bibitem{LapEigMap}
M.~Belkin and P.~Niyogi.
\newblock Laplacian eigenmaps and spectral techniques for embedding and
  clustering.
\newblock 2001.

\bibitem{Chung}
F.~R.~K Chung.
\newblock {\em Spectral Graph Theory}.
\newblock American Mathematical Society, 1997.

\bibitem{W2D}
C.~Gollan D.~Keysers, T.~Deselaers and H.~Ney.
\newblock Deformation models for image recognition.
\newblock {\em Transactions on Pattern Analysis and machine Intelligence},
  2007.

\bibitem{sin_theta}
C.~Davis and W.~M. Kahan.
\newblock The rotation of eigenvectors by a perturbation. iii.
\newblock {\em Journal on Numerical Analysis}, 1970.

\bibitem{concentration}
D.~Dubhashi and A.~Panconesi.
\newblock {\em Concentration of Measure for the Analysis of Randomized
  Algorithms}.
\newblock Cambridge University Press, 2012.

\bibitem{randomGraph}
P.~Erd\H{o}s and A.~R\'{e}nyi.
\newblock On random graphs.
\newblock {\em Publicationes Mathematicae}, pages 290--297, 1959.

\bibitem{MJ24}
N.~Jojic F.~Wauthier and M.~Jordan.
\newblock Active spectral clustering via iterative uncertainty reduction.
\newblock {\em SIGKDD Conference on Knowledge Discovery and Data Mining}, 2012.

\bibitem{alon}
A.~Faktor and M.~Irani.
\newblock "clustering by composition" – unsupervised discovery of image
  categories.
\newblock {\em European Conference on Computer Vision}, 2013.

\bibitem{cutClus}
R.~Tarjan G.~Flake and K.Tsioutsiouliklis.
\newblock Graph clustering and minimum cut trees.
\newblock {\em Internet Mathematics}, 2003.

\bibitem{sampWithout}
D.~Gross and V.~Nesme.
\newblock Note on sampling without replacing from a finite collection of
  matrices.
\newblock {\em CoRR, abs/1001.2738}, 2010.

\bibitem{KarLemma}
D.~Karger.
\newblock Global min-cuts in $\mathcal{RNC}$ and other ramifications of a
  simple mincut algorithm.
\newblock In {\em SODA}, 1993.

\bibitem{KarSpar}
D.~R. Karger.
\newblock using randomized sparsification to approximate minimum cuts.
\newblock {\em In Proceedings of the 5th Annual ACM-SIAM Symposium on Discrete
  Algorithms}, pages 424--432, 1994.

\bibitem{caltech7}
Y.J. Lee and K.~Grauman.
\newblock Foreground focus: Unsupervised learning from partially matching.
\newblock {\em International Journal of Computer Vision}, 2009.

\bibitem{DiffMaps}
B.~Nadler, S.~Lafon, R.~Coifman, and I.~Kevrekidis.
\newblock Diffusion maps, spectral clustering and eigenfunctions of
  fokker-planck operators.
\newblock In {\em NIPS}, 2005.

\bibitem{SCoaB}
O.~Shamir and N.~Tishby.
\newblock Spectral clustering on a budget.
\newblock {\em Journal of Machine Learning Research}, 2011.

\bibitem{normCut}
J.~Shi and J.~Malik.
\newblock Normalized cuts and image segmentation.
\newblock {\em Transactions on Pattern Analysis and Machine Intelligance},
  2000.

\bibitem{speilman}
D.~Spielman and S.-H. Teng.
\newblock Spectral sparsification of graphs.
\newblock {\em SIAM Journal on Computing}, 40, 2011.

\bibitem{tropp}
J.~A. Tropp.
\newblock User-friendly tail bounds for sums of random matrices.
\newblock {\em Foundations of computational mathematics}, 2012.

\bibitem{Vu}
V.~Vu.
\newblock Spectral norm of random matrices.
\newblock {\em Combinatorica,}, 2007.

\bibitem{energy}
O.~Veksler Y.~Boykov and R.~Zabih.
\newblock Fast approximate energy minimization via graph cuts.
\newblock {\em Transactions on Pattern Analysis and machine Intelligence},
  2001.

\end{thebibliography}
\end{document}